\newcommand{\R}{\mathbb{R}}
\newcommand{\BigO}[1]{\ensuremath{\mathcal{O}\left(#1\right)}}                  %
\newcommand{\BigOT}[1]{\ensuremath{\widetilde{\mathcal{O}}\left(#1\right)}}     %
\newcommand{\BigOm}[1]{\ensuremath{\Omega\left(#1\right)}}                      %
\newcommand{\BigT}[1]{\ensuremath{\Theta\left(#1\right)}}                       %
\newcommand{\vect}[1]{\ensuremath{\mathbf{#1}}}                                 %
\newcommand{\vectsym}[1]{\ensuremath{\boldsymbol{#1}}}                          %
\newcommand{\mat}[1]{\ensuremath{\mathbf{\MakeUppercase{#1}}}}                  %
\newcommand{\Exp}[2]{\ensuremath{\mathbb{E}_{#1}\left[#2\right]}}                %
\newcommand{\Var}[2]{\ensuremath{\mathrm{Var}_{#1}\left[#2\right]}}                %
\newcommand{\Cov}[2]{\ensuremath{\mathrm{Cov}_{#1}\left[#2\right]}}              %
\newcommand{\Ind}[1]{\ensuremath{\mathbf{1}\left[#1\right]}}                     %
\newcommand{\Norm}[1]{\ensuremath{\lVert #1 \rVert}}                              %
\newcommand{\NormI}[1]{\ensuremath{\lVert #1 \rVert}_1}                           %
\DeclareMathOperator*{\hadprod}{\circ}                                 		     %
\newcommand{\parg}{\makebox[1ex]{$\mathbf{\cdot}$}}                                %
\newcommand{\matrx}[1]{\begin{bmatrix}#1\end{bmatrix}}                           %
\newcommand{\InNorm}[1]{{\left\vert\kern-0.2ex\left\vert\kern-0.2ex\left\vert #1 
    \right\vert\kern-0.2ex\right\vert\kern-0.2ex\right\vert}}                    %
\newcommand{\InNormII}[1]{{\left\vert\kern-0.2ex\left\vert\kern-0.2ex\left\vert #1 
    \right\vert\kern-0.2ex\right\vert\kern-0.2ex\right\vert}_2}                    %
\newcommand{\InNormInfty}[1]{{\left\vert\kern-0.2ex\left\vert\kern-0.2ex\left\vert #1 
    \right\vert\kern-0.2ex\right\vert\kern-0.2ex\right\vert}_{\infty}}           %
\newcommand{\Abs}[1]{\ensuremath{\lvert #1 \rvert}}                              %
\newcommand{\aAbs}[1]{\ensuremath{\left\lvert #1 \right\rvert}}                  %
\DeclarePairedDelimiterX{\Inner}[2]{\langle}{\rangle}{#1, #2}                    %
\newcommand{\Land}{\wedge}                                                       %
\newcommand{\defeq}{\overset{\mathrm{def}}{=}}                                                      %
\DeclareMathOperator*{\union}{\cup}
\DeclareMathOperator*{\intersection}{\cap}
\DeclareMathOperator*{\notsubseteq}{\nsubseteq}
\DeclareMathOperator*{\argmin}{argmin}
\newtheorem{definition}{Definition}
\newtheorem{proposition}{Proposition}
\newtheorem{assumption}{Assumption}
\newtheorem{lemma}{Lemma}
\newtheorem{theorem}{Theorem}
\newtheorem{remark}{Remark}
\newtheorem{corollary}{Corollary}
\theoremstyle{definition}
\def\independenT#1#2{\mathrel{\rlap{$#1#2$}\mkern2mu{#1#2}}}
\DeclareMathOperator{\independent}{\protect\mathpalette{\protect\independenT}{\perp}}  %
\newcommand{\mcp}[1]{\mathclap{#1}}
\newcommand{\mA}{\mat{A}}
\newcommand{\mB}{\mat{B}}
\newcommand{\mD}{\mat{D}}
\newcommand{\mE}{\mat{E}}
\newcommand{\mI}{\mat{I}}
\newcommand{\mX}{\mat{X}}
\newcommand{\mSig}{\mat{\Sigma}}
\newcommand{\mOmg}{\mat{\Omega}}
\newcommand{\ve}{\vect{e}}
\newcommand{\vv}{\vect{v}}
\newcommand{\vx}{\vect{x}}
\newcommand{\vomg}{\vectsym{\omega}}
\newcommand{\Set}[1]{\{#1\}}    
\newcommand{\Gf}{\mathcal{G}}                  %
\newcommand{\Gfr}{\widetilde{\mathcal{G}}}     %
\newcommand{\Vs}{\mathsf{V}}  %
\newcommand{\Es}{\mathsf{E}}  %
\newcommand{\G}{\mathsf{G}}   %
\newcommand{\Gh}{\widehat{\mathsf{G}}}   %
\newcommand{\Ss}{\mathsf{S}}  %
\newcommand{\Sh}{\widehat{\mathsf{S}}}  %
\newcommand{\Pf}{\mathcal{P}}  %
\newcommand{\Par}[2]{\pi_{#2}(#1)}    %
\newcommand{\Des}[2]{\mathsf{D}_{#2}(#1)}    %
\newcommand{\Anc}[2]{\mathsf{A}_{#2}(#1)}    %
\newcommand{\hPar}[2]{\widehat{\pi}_{#2}(#1)}    %
\newcommand{\Chi}[2]{\phi_{#2}(#1)}   %
\newcommand{\Diag}{\vect{Diag}}
\newcommand{\diag}{\vect{diag}}       
\newcommand{\mi}{\mathsf{-i}}         %
\newcommand{\Sp}{\mathcal{S}}           %
\newcommand{\NB}[2]{\mathsf{N}_{#2}(#1)}            %
\newcommand{\mhOmg}{\widehat{\mOmg}}    %
\newcommand{\mtOmg}{\mOmg^*}            %
\newcommand{\mhB}{\widehat{\mB}}        %
\newcommand{\hB}{\widehat{B}}           %
\newcommand{\homg}{\widehat{\omega}}    %
\newcommand{\Ts}{\mathcal{T}}        %
\newcommand{\varh}{\widehat{\sigma}^2}       %
\newcommand{\var}{\sigma^2}                  %
\newcommand{\hOmega}{\widehat{\Omega}}       %
\newcommand{\Update}{\textsc{Update}}          %
\newcommand{\bomg}{\bar{\omega}}               %
\newcommand{\vbomg}{\bar{\vectsym{\omega}}}    %
\newcommand{\vhomg}{\widehat{\vectsym{\omega}}}    %
\newcommand{\cI}{\mathcal{I}}                  %
\newcommand{\cR}{\mathcal{R}}                  %
\newcommand{\varmax}{\var_{\mathrm{max}}}      %
\newcommand{\varmin}{\var_{\mathrm{min}}}      %
\newcommand{\Bmax}{B_{\mathrm{max}}}           %
\newcommand{\Bmin}{B_{\mathrm{min}}}           %
\newcommand{\alphamax}{\alpha_{\mathrm{max}}}  %
\newcommand{\alphamin}{\alpha_{\mathrm{min}}}  %
\title{Learning linear structural equation models 
in polynomial time and sample complexity}
\author{Asish Ghoshal and Jean Honorio\\
Department of Computer Science\\
Purdue University\\
West Lafayette, IN - 47907\\
\{aghoshal, jhonorio\}@purdue.edu}
\date{}
\begin{document}

\maketitle
\begin{abstract}
The problem of learning structural equation models (SEMs) from data is a fundamental problem in causal inference. We develop a new algorithm --- which is computationally and statistically efficient and works in the high-dimensional regime --- for learning linear SEMs from purely observational data with arbitrary noise distribution. We consider three aspects of the problem: identifiability, computational efficiency, and statistical efficiency.  We show that when data is generated from a linear SEM over $p$ nodes and maximum degree $d$, our algorithm recovers the directed acyclic graph (DAG) structure of the SEM under an \emph{identifiability condition} that is more general than those considered in the literature, and without \emph{faithfulness} assumptions.  In the population setting, our algorithm recovers the DAG structure in $\mathcal{O}(p(d^2 + \log p))$ operations. 
In the finite sample setting, if the estimated precision matrix is sparse, our algorithm has a smoothed complexity of $\BigOT{p^3 + pd^7}$,
while if the estimated precision matrix is dense, our algorithm has a smoothed complexity of $\BigOT{p^5}$.
 For sub-Gaussian noise, we show that our algorithm has a sample complexity of $\mathcal{O}(\frac{d^8}{\varepsilon^2} \log (\frac{p}{\sqrt{\delta}}))$ to achieve $\varepsilon$ element-wise additive error with respect to the true \emph{autoregression matrix} with probability at most $1 - \delta$, while for noise with bounded $(4m)$-th moment, with $m$ being a positive integer, our algorithm has a sample complexity of $\mathcal{O}(\frac{d^8}{\varepsilon^2} (\frac{p^2}{\delta})^{\nicefrac{1}{m}})$.
\end{abstract}
\section{Introduction}
\paragraph{Motivation.}
Elucidating causal relationship between different entities or variables is a fundamental 
task in various scientific disciplines such as finance, genetics, medicine,
neuroscience, artificial intelligence, among others. Learning cause-effect relationships
from purely observational data is often the only recourse available in situations where
performing randomized experiments or interventions can be expensive, impractical, unethical, or downright impossible. 
For continuous-valued variables, structural equation models (SEMs) is a commonly employed formalism
for performing causal inference. Conditions under which SEMs can
be uniquely identified from observational data have been recently characterized. Unfortunately,
for linear SEMs, identifiability conditions have been rather limited, and existing structure 
learning algorithms are inefficient. In this paper, we consider the problem of learning linear 
SEMs over $p$ variables and bounded-degree $d$, from purely observational data, with 
arbitrary noise distributions having bounded second moment
 --- including but not limited to the Gaussian distribution. We generalize existing identifiability
conditions for learning linear SEMs, and present computationally and statistically
efficient algorithms for learning the structure of linear SEMs when identifiable. The paper makes
the following contributions.

\paragraph{Our Contribution.} We present a new identifiability condition for learning linear SEMs from observational data that generalizes the homoscedastic Gaussian noise (equal noise variance) case considered by \cite{Peters2014}. Our algorithm also
works for the case when the noise variances are known up to a constant factor --- a sufficient condition
under which linear SEMs are identifiable as shown by \cite{loh_high-dimensional_2013}.
This disproves an earlier conjecture by \cite{loh_high-dimensional_2013} that "variance scaling or non-Gaussianity is necessary in order to guarantee identifiability" of linear SEMs. Moreover, we show that our identifiability condition is in general necessary for 
ensuring identifiability of linear SEMs, in the sense that there exist an exponential number of DAGs, which under uncountably 
many autogression matrices and noise variances, induce the same covariance and precision matrix, and specify distributions that have the
same conditional independence structures.

Our method is fully non-parametric, works for both Gaussian and non-Gaussian noise, and, to the best of our knowledge, the most efficient algorithm available for learning linear SEMs with provable guarantees. Given the inverse covariance (or precision) matrix, our method, which resembles a Cholesky factorization, can recover the structure and parameters of the SEM exactly in $\mathcal{O}(p(d^2 + \log p))$ floating-point operations.  In the finite sample setting, our method involves estimating the precision matrix, which can be done by solving $p$ linear programs (LPs) and then performing $p$ iterations to learn the structure and parameters of the SEM  by identifying and removing terminal (sink) vertices. If the estimated precision matrix is sparse, then each iteration involves solving at most $d$ linear programs in at most $d^2$ dimensions,  leading to an overall smoothed complexity of $\BigOT{p^3 + pd^7}$. When the estimated precision matrix is dense our method has a smoothed complexity of $\BigOT{p^5}$. This is significantly better than \cite{Peters2014}'s algorithm for learning linear Gaussian SEMs as well as \cite{loh_high-dimensional_2013}'s algorithm for learning SEMs with known noise variance. While the former is is exponential in $p$, the latter is exponential in $d$ and the tree-width of the SEM when the estimated precision matrix is sparse and exponential in $p$ for the dense case.

Our algorithm also works in the high-dimensional regime when $d = o(p)$,
and has a sample complexity of $\mathcal{O}(\frac{d^8}{\varepsilon^2} \log (\frac{p}{\sqrt{\delta}}))$ and  $\mathcal{O}(\frac{d^8}{\varepsilon^2} (\frac{p^2}{\delta})^{\nicefrac{1}{m}})$ for sub-Gaussian noise and noise with bounded $4m$-th moment respectively, for recovering the autoregression matrix of the SEM up to $\epsilon$ additive error with probability at least $1 - \delta$. The sample complexity of our algorithm for sub-Gaussian noise is better than \cite{loh_high-dimensional_2013}'s algorithm, which has a sample complexity of $\BigO{p^2 \log p}$, and is therefore unsuitable for the high-dimensional regime. Moreover, unlike \cite{loh_high-dimensional_2013}'s algorithm, and other methods that use conditional independence tests, for instance, the PC algorithm for learning Gaussian SEMs \cite{kalisch_estimating_2007}, our algorithm does not require any faithfulness conditions, and only requires a weaker \emph{causal minimality} condition. The faithfulness assumption requires that the distribution $\Pf(X)$ contain only those conditional independence assertions that are implied by the \emph{d-separation} criteria of the DAG \cite{spirtes2000causation}.
However, faithfulness cannot be tested from data in full generality \cite{Zhang2008} and algorithms that infer the DAG structure from
a finite number of samples must require \emph{strong faithfulness} \cite{zhang2002strong}, which is a restrictive assumption.
Our results has the following significant yet hitherto known implication for learning Gaussian Bayesian networks. Given data generated from a Gaussian Bayesian network that is causal minimal to the true DAG structure, one can recover the DAG structure in polynomial time and sample complexity from a finite number of samples, under more general identifiability conditions than homoscedastic noise. 

Lastly, we obtain several useful results about the theory of linear SEMs en route to developing our main algorithm for learning linear SEMs.

\paragraph{Our Techniques.} Our algorithm for learning linear SEMs differs conceptually
from previous test-based, score-based or inverse-covariance-estimation-based methods. 
We are therefore able to get rid of many of the shortcomings of existing methods like requirement
of strict non-Gaussianity of noise \cite{shimizu_directlingam:_2011}, homoscedasticity \cite{Peters2014},
and faithfulness \cite{loh_high-dimensional_2013,kalisch_estimating_2007}. We do so my obtaining and 
exploiting various properties of terminal vertices in linear SEMs. We obtain our sample complexity results
by using various properties of sub-Gaussian and bounded-moment variables and using concentration results
for the empirical covariance matrix under the aforementioned noise conditions. Lastly, we improve
the computational complexity of our algorithm by exploiting the sparsity structure of the precision matrix
to obtain solutions of ``larger'' LPs (size $\BigO{p}$) by solving much ``smaller'' LPs (size $\BigO{d^2}$).
\section{Related Work}
We start our discussion of existing literature by first presenting known
identifiability conditions for learning SEMs and Bayesian networks. \cite{peters_causal_2014} proved identifiability of distributions
drawn from a restricted SEM with additive noise, where in the restricted SEM the functions are assumed to be non-linear and thrice continuously differentiable. Linear SEMs are identifiable if (a) the noise variables are non-Gaussian \cite{Shimizu2006}, (b) the
noise variances are known up to a constant factor \cite{loh_high-dimensional_2013}, and (c) noise variables are Gaussian
and have the same variance \cite{Peters2014} (homoscedastic noise). \cite{park_learning_2017} introduced Quadratic Variance Function (QVF)
DAG models --- a class of Bayesian networks in which the conditional variance of a variable is a quadratic function of its conditional mean ---
and proved identifiability of the models from observational data. However, QVF DAG models cannot be expressed as SEMs, and the
quadratic variance property holds for a handful of conditional distributions which includes Binomial, Poisson, Exponential, Gamma, 
and a few others. 

The computational and statistical complexity landscape of learning linear SEMs is peppered by inefficient algorithms.
This is in part justified by various hardness results known in the literature for learning DAGs from observational data
\cite{chickering1996learning,dasgupta1999learning}. Algorithms for learning DAGs can be divided into two categories:
independence test based methods and score based methods. Score based methods use a score function, typically penalized
log-likelihood, to find the best scoring DAG among the space of all DAGs. Since the number of DAGs and degree-bounded
DAGs is exponential in $p$ \cite{Robinson1977,ghoshal2016information}, score-based methods are exponential time.
A popular score function for learning Gaussian SEMs is the $\ell_0$-penalized Gaussian log-likelihood score 
proposed by \cite{van_de_geer_l0-penalized_2013}. 
\cite{Peters2014} proposed using $\ell_0$-penalized Gaussian log-likelihood score for learning homoscedastic noise linear Gaussian SEMs
along with a heuristic greedy search algorithm which is not guaranteed to find the correct (highest-scoring) solution.
\cite{loh_high-dimensional_2013} showed that under a faithfulness assumption, the sparsity pattern of the precision matrix corresponds
to the edge structure of the \emph{moral graph} of the underlying DAG. They exploit this property to devise an algorithm
that searches for the highest-scoring DAG, using dynamic programming, that has the same moral graph as that given by the sparsity
pattern of the precision matrix. Independence test based methods on the other hand require
restrictive faithfulness conditions to guarantee structure recovery. \cite{kalisch_estimating_2007} proposed
using the PC algorithm to learn Gaussian SEMs, which has a computational complexity of $\BigO{p^d}$ and
is only efficient for learning very sparse Gaussian SEMs.
Among computationally efficient algorithms, the \emph{Direct-LiNGAM} algorithm \cite{shimizu_directlingam:_2011},
which strictly requires non-Gaussianity of the noise variables, needs an infinite number
of samples to guarantee structure recovery. This is because of the use of independence testing between a variable and its residuals
to detect exogenous variables (variables with no parents). For the same reason, the correctness of 
\emph{RESIT} \cite{peters_causal_2014}, which is a computationally efficient algorithm for learning \emph{non-linear SEMs}, 
is only guaranteed in the population setting.

Other authors have proposed various approximation algorithms and heuristic methods for learning Bayesian networks,
which can be used to learn Gaussian SEMs by using appropriate score functions. Popular heuristic methods are
max-min hill climbing (MMHC) algorithm by \cite{tsamardinos2006max}, and the Greedy Equivalence Search (GES) algorithm 
proposed by \cite{chickering_optimal_2003}. \cite{jaakkola_learning_2010}
proposed an LP-relaxation based method for learning Bayesian networks which is an approximation algorithm.
\section{Preliminaries}
We begin this section by introducing our notations and definitions before
formalizing the problem of learning linear SEMs from observational data. 
We will let $[p] \defeq \Set{1, \ldots, p}$. Vectors and matrices are
denoted by lowercase and uppercase bold faced letters respectively. 
Random variables (including random vectors) are denoted by uppercase letters. For any two non-empty index sets 
$s_r, s_c \subseteq [p]$, the matrix $\mA_{s_r, s_c} \in \R^{\Abs{s_r} \times \Abs{s_c}}$
denotes the submatrix of $\mA \in \R^{p \times p}$ obtained by selecting the
$s_r$ rows and $s_c$ columns of $\mA$. With a slight abuse of notation, we will allow the 
index sets $s_r$ and $s_c$ to be a single index, e.g., $i$,
and we will denote the index set of all rows (or columns) by $*$.
For any matrix $\mA$ (equivalently for vectors), we will denote its 
support set by: $\Sp(\mA) = \Set{(i,j) \in [p] \times [p] \,|\, A_{i,j} \neq 0 }$.
Vector $\ell_p$ norms are denoted by $\Norm{\parg}_p$. 
For matrices, $\Norm{\parg}_p$ denotes the induced (or operator) $\ell_p$-norm
and $\Abs{\parg}_p$ denotes the elementwise $\ell_p$ norm, i.e., 
$\Abs{\mA}_p \defeq (\sum_{i,j} \Abs{A_{i,j}}^p)^{\nicefrac{1}{p}}$.
For two matrices $\mA$ and $\mB$, $\mA \hadprod \mB$ denotes the Hadamard product of $\mA$ and $\mB$,
while $\diag(\mA)$ denotes the vector formed by taking the diagonal of $\mA$. For a
vector $\vv$, $\Diag(\vv)$ denotes the diagonal matrix with $\vv$ in the diagonal. 
Finally, we define the set $\mi \defeq [p] \setminus \Set{i}$.

Let $\G = ([p], \Es)$ be a directed
acyclic graph (DAG) where $[p]$ is the vertex set and $\Es \subset [p] \times [p]$ is the set of directed edges.
An edge $(i,j) \in \Es$ implies the edge $i \leftarrow j$.
We denote by $\Par{i}{\G}$ and $\Chi{i}{\G}$ the parent set and the set of children of the $i$-th node respectively,
in the graph $\G$; and drop the subscript $\G$ when the clear from context. The set of neighbors
of the $i$-th node is denoted by $\NB{i}{\G} = \Par{i}{\G} \union \Chi{i}{\G}$. A node $j$ is a \emph{descendant}
of $i$ in $\G$ if there exists a (directed) path from $i$ to $j$ in $\G$. We will denote the set of descendants of
$i$ by $\Des{i}{\G}$. Similarly, we will denote the set of ancestors of $i$ --- nodes $j$ such that there is a
path from $j$ to $i$ in $\G$ --- by the set $\Anc{i}{\G}$.
A vertex $i \in [p]$ is a \emph{terminal vertex} in $\G$ if $\Chi{i}{\G} = \varnothing$. For each $i \in [p]$ we have
a random variable $X_i \in \R$, $X = (X_1, \ldots, X_p) \in \R^p$ is the $p$-dimensional vector of random variables,
and $\vx = (x_1, \ldots, x_p)$ is a joint assignment to $X$. 
Every DAG $\G = ([p], \Es)$ defines a set of
topological orderings $\Ts_{\G}$ over $[p]$ that are compatible with the DAG $\G$, i.e.,
$\Ts_{\G} = \Set{\tau \in \mathrm{S}_p \mid \tau(j) < \tau(i) \text{ if } (i,j) \in \Es}$, 
where $\mathrm{S}_p$ is the set of all possible permutations of $[p]$.

The random vector $X$ follows a linear structural equation model (SEM), if each variable can be written
as a linear combination of the variables in its parent set as follows:
\begin{align}
X_i = \sum_{j \in \Par{i}{\G}} B_{i,j} X_j + N_i && (\forall i \in [p]), \label{eq:sem}
\end{align}
where $\G = ([p], \Es)$ is a DAG, 
$N = (N_1, \ldots, N_p)$ are the noise variables, and $N_i \independent X_{1}, \ldots, X_{i - 1}$.
Without loss of generality, we assume that $\Exp{}{X_i} = \Exp{}{N_i} = 0,\, \forall i \in [p]$.
As is typically the case in the literature of SEMs,
we further assume that the noise variables $N_i$ have bounded second moments and are independent.
Thus $\Cov{}{N} = \Exp{}{N N^T} = \Diag(\var_1, \ldots, \var_p)$. We can then write \eqref{eq:sem}
in vector form as follows:
\begin{align}
X = \mB X + N, \label{eq:sem_vect}
\end{align}
where $\mB = (B_{i,j})$ is referred to as the \emph{autoregression matrix} and $\Sp(\mB) = \Es$.
Therefore, we will denote an SEM by the triple $(\G, \mB, \Set{\var_i}_{i \in [p]})$, or more compactly by $(\G, \mB, \Set{\var_i})$.

Given an SEM $(\G, \mB, \Set{\var_i})$, the joint distribution $\Pf(X)$ is completely determined and
factorizes according to the DAG structure $\G$:
\begin{align}
\Pf(X; \G) = \prod_{i=1}^p \Pf_i(X_i | X_{\Par{i}{\G}}; \G),   \label{eq:bn}
\end{align}
where $\Pf_i$ is the conditional distribution of the $X_i$. We then say that the distribution $\Pf$
is \emph{Markov with respect to the DAG $G$}, i.e., $X_i$ satisfies
the Markov condition: $X_i \independent X_{j} \mid X_{\Par{i}{}}, \,
\forall i \in [p], \forall j \in [p] \setminus (\Des{i}{} \union \Par{i}{} \union \Set{i})$.
Thus an SEM is equivalent to a \emph{Bayesian network}.
Specifically, if the noise variables are Gaussian, then $\Pf$ is a \emph{Gaussian Bayesian network} (GBN),
where the joint distribution $\Pf$ and the conditional distributions $\Pf_i$ are Gaussian. 
We obtain our theoretical results for the class of degree-bounded DAGs 
$\Gf_{p,d} \defeq \Set{\G \mid \G = ([p], \Es) \text{ is a DAG and } \Abs{\NB{i}{\G}} \leq d,\, \forall i \in [p]}$.

Next, we define the notion of \emph{causal minimality}, introduced by \cite{Zhang2008}, which is important 
for ensuring identifiability of linear SEMs considered in this paper.
\begin{definition}[Causal Minimality] Given a DAG $\G$, a distribution $\Pf(X)$, that is Markov with respect to $G$,
is causal minimal if $\Pf$ is not Markov with respect to a proper subgraph of $\G$.
\end{definition}
Our assumption of $\Sp(\mB) = \Es$, ensures that Lemma 4 of \cite{peters_causal_2014} holds
for all SEMs $(\G, \mB, \Set{\var_i})$. This in turn implies that 
the joint distribution $\Pf(X)$ determined by the SEM $(\G, \mB, \Set{\var_i})$
is causal minimal with respect to $\G$ (see Proposition 2 in \cite{peters_causal_2014}).
Therefore, the SEMs considered in the paper are causal minimal.

The problem of learning the structure of an SEM is as follows.
Given an $n \times p$ data matrix $\mX = (\vx_1, \ldots, \vx_p)$, with $\vx_i \in \R^n$,
drawn from an SEM $(\G^*, \mB^*, \Set{\var_i})$ with $\G^* \in \Gf_{p,d}$, we want to learn an SEM $(\Gh, \mhB, \Set{\varh_i})$
from $\mX$ such that $\G^* = \Gh$.

\section{Learning SEMs with unknown error variances}
We start with presenting our main results for learning SEMs when the error variances
are unknown. Our algorithm for learning SEMs works by constructing the SEM in a bottom-up fashion.
The algorithm has $p$ iterations. In each iteration it identifies and removes a terminal vertex,
learning its parent set and edge weights along the way. We
show that, under a certain identifiability condition which generalizes other identifiability 
conditions known in the literature, e.g., homoscedastic errors, and without assuming faithfulness of the distribution to the DAG,
each of these steps can be performed efficiently using only the precision matrix or
an estimator of it. 
\subsection{Identifiability}
The following assumption gives a sufficient condition under which the structure and parameters
of an SEM can be uniquely recovered from observational data using Algorithm \ref{alg:population}.
The assumption is defined in terms of subgraphs of $\G$ obtained
by removing terminal vertices sequentially. For any $\tau \in \Ts_{\G}$,
we will consider sequence of graphs $\G[m,\tau] = (\Vs[m, \tau], \Es[m, \tau])$,
indexed by $(m, \tau)$, where $\G[m, \tau]$ is
the induced subgraph of $\G$ over the first $m$ vertices in the topological ordering $\tau$,
i.e., $\Vs[m, \tau] \defeq \Set{i \in [p] \mid \tau(i) \leq m}$ and
$\Es[m, \tau] \defeq \Set{(i,j) \in \Es \mid i \in \Vs[m, \tau] \Land j \in \Vs[m, \tau]}$.
\begin{assumption}[Identifiability condition]
\label{ass:identifiability_condition}
Given an SEM $(\G, \mB, \Set{\var_i})$ with $\G \in \Gf_{p,d}$, then
$\forall (i, j) \in \Vs[m, \tau] \times \Vs[m, \tau], m \in [p]$, and $\tau \in \Ts_{\G}$,
such that $\Chi{i}{\G[m,\tau]} = \varnothing \Land \Chi{j}{\G[m,\tau]} \neq \varnothing$:
\begin{align}
\frac{1}{\sigma_i^2} < \frac{1}{\sigma_j^2} + \sum_{l \in \Chi{j}{\G[m, \tau]}} \frac{B_{l,j}^2}{\sigma_l^2} \label{eq:identifiability_condition},
\end{align}
\end{assumption}
As we will show later, Assumption \ref{ass:identifiability_condition} essentially lays down a condition
under which terminal vertices, and subsequently the causal order, can be identified from the precision matrix. 
From Assumption \ref{ass:identifiability_condition}, we immediately get the following special cases for
identifiability of linear SEMs, where the first one is the homoscedastic case known in the literature,
while the second case is new.
\begin{proposition}[Sufficient conditions for identifiability]
\label{prop:suff_cond_identifiability}
Let $(\G, \mB, \Set{\var_i})$ be an SEM satisfying Assumption \ref{ass:identifiability_condition},
with precision matrix $\mOmg$.
Then, either of the following two conditions are sufficient for 
uniquely identifying the autoregression matrix  $\mB$ and the DAG $\G$ from $\mOmg$:
\begin{enumerate}[(i)]
\item $\forall i \in [p], \, \sigma_i = \sigma$, for some $\sigma > 0$,
\item $\forall i \in [p], \, 1 < \sigma_i \leq \Bmin$, where $\Bmin \defeq \min \Set{\Abs{B_{i,j}} \mid (i,j) \in \Es}$.
\end{enumerate}
\end{proposition}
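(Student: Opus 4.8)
The plan is to reduce the proposition to checking that each of conditions~(i) and~(ii) implies the inequality~\eqref{eq:identifiability_condition} of Assumption~\ref{ass:identifiability_condition}; once~\eqref{eq:identifiability_condition} holds, the unique recovery of $\mB$ and $\G$ from $\mOmg$ via Algorithm~\ref{alg:population} is exactly the sufficiency guarantee that Assumption~\ref{ass:identifiability_condition} provides. So I would fix an arbitrary $m \in [p]$, $\tau \in \Ts_{\G}$, and a pair $(i,j) \in \Vs[m,\tau] \times \Vs[m,\tau]$ with $\Chi{i}{\G[m,\tau]} = \varnothing$ and $\Chi{j}{\G[m,\tau]} \neq \varnothing$. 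The structural fact I would lean on in both cases is the support identity $\Sp(\mB) = \Es$: every $l \in \Chi{j}{\G[m,\tau]}$ corresponds to an edge $(l,j) \in \Es[m,\tau] \subseteq \Es$, so $B_{l,j} \neq 0$ and hence $B_{l,j}^2 > 0$.

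For the homoscedastic case~(i), I would substitute $\sigma_i = \sigma_j = \sigma_l = \sigma$ into~\eqref{eq:identifiability_condition}; the common factor $1/\sigma^2$ cancels and the claim collapses to $0 < \sum_{l \in \Chi{j}{\G[m,\tau]}} B_{l,j}^2$. Because $j$ is non-terminal in $\G[m,\tau]$ it has at least one child, and by the support identity the corresponding summand is strictly positive, so the sum is positive and~\eqref{eq:identifiability_condition} holds.

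For case~(ii) I would bound the two sides separately. The hypothesis $\sigma_i > 1$ gives the strict bound $1/\sigma_i^2 < 1$ on the left-hand side. For the right-hand side I would drop the positive term $1/\sigma_j^2$ and keep a single child $l \in \Chi{j}{\G[m,\tau]}$ (which exists since $j$ is non-terminal); the definition $\Bmin = \min\Set{\Abs{B_{i,j}} \mid (i,j) \in \Es}$ gives $B_{l,j}^2 \geq \Bmin^2$, while $\sigma_l \leq \Bmin$ gives $\sigma_l^2 \leq \Bmin^2$, so that $B_{l,j}^2/\sigma_l^2 \geq 1$ and the right-hand side is at least $1$. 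Combining $1/\sigma_i^2 < 1$ with this lower bound yields the strict inequality~\eqref{eq:identifiability_condition}.

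I do not anticipate a real obstacle, as both arguments are direct substitutions into~\eqref{eq:identifiability_condition}. The only points that need care are (a) using $\Chi{j}{\G[m,\tau]} \neq \varnothing$ to guarantee at least one summand on the right, (b) invoking $\Sp(\mB) = \Es$ to turn ``$j$ has a child in $\G[m,\tau]$'' into a strictly positive coefficient $B_{l,j}^2$, and (c) observing that condition~(ii) is only meaningful when $\Bmin > 1$, so that the interval $(1,\Bmin]$ of admissible standard deviations is nonempty.
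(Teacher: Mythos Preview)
Your proposal is correct and follows essentially the same approach as the paper: in both cases you verify the inequality~\eqref{eq:identifiability_condition} directly, using causal minimality ($\Sp(\mB)=\Es$) to get $B_{l,j}^2>0$ in case~(i), and in case~(ii) combining $1/\sigma_i^2<1$ with $B_{l,j}^2/\sigma_l^2\geq 1$ for a child $l$ of $j$. The paper's argument for~(ii) is phrased as $1/\sigma_i^2-1/\sigma_j^2<1$ together with $B_{l,j}^2/\sigma_l^2\geq 1$ for every $(l,j)\in\Es$, but this is the same estimate you obtain after dropping $1/\sigma_j^2$ and retaining a single child term.
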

Next we show that the identifiability condition \ref{ass:identifiability_condition} is in general necessary,
i.e, if Assumption \ref{ass:identifiability_condition}  is violated, then there exists an 
exponential number of DAG structures that, coupled with an uncountable number of autoregression
matrices and noise variances, induce the same covariance and precision matrix, and determine joint distributions $\Pf(X)$
that are causal minimal and Markov to the DAG structures. \emph{Therefore, no algorithm based on independence testing,
or that uses solely the covariance or precision matrix, can recover the true DAG structure in polynomial time}.
In the following lemma we will equivalently denote an SEM by $(\G, \mB, \mD)$
where $\mD$ is a diagonal matrix with $D_{i,i} = \var_i$.
\begin{lemma}
\label{lemma:necessary}
There exists $\Gfr_{p,d} \subset \Gf_{p,d}$ with $\Abs{\Gfr_{p,d}} = 2^{\BigT{p}}$,
autoregression matrices $\mB(\beta)$ parameterized by $\beta$, and diagonal matrices $\mD(v_1, v_2)$
parameterized by $v_1, v_2$ such that any SEM $(\G, \mB(\beta), \mD(v_1, v_2))$ with $\G \in \Gfr_{p,d}$
does not satisfy Assumption \ref{ass:identifiability_condition}, induces the same covariance and precision matrix,
and distribution $\Pf(X)$ that has the same conditional independence structure, $\forall \beta \in (-\infty, \infty)$,
$v_1 \in (0, \infty)$ and $v_2 > v_1$.
\end{lemma}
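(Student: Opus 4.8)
The plan is to exhibit an explicit, fully symmetric family assembled from independent two-node gadgets. Assuming $p$ even (otherwise set aside one isolated vertex), I would partition $[p]$ into the $p/2$ pairs $\Set{2k-1,2k}$ and, in each pair, include exactly one directed edge, oriented either $2k-1 \to 2k$ or $2k \to 2k-1$. Let $\Gfr_{p,d}$ be the collection of all DAGs obtained by choosing an orientation independently in every pair. Each member is acyclic, every vertex has at most one neighbor (degree $\le d$), so $\Gfr_{p,d} \subset \Gf_{p,d}$; since the $p/2$ orientations are chosen independently, $\Abs{\Gfr_{p,d}} = 2^{p/2} = 2^{\BigT{p}}$, which is the required cardinality.

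Next I would attach parameters that make the induced second-order structure invariant to the orientation of every pair. For a single pair I assign the source noise variance $v_2$ and choose the edge weight so that the two endpoints have equal marginal variance and covariance $\beta$; this forces the edge weight to be $\beta/v_2$ and the sink noise variance to be $v_1 = v_2 - \beta^2/v_2$, which is positive exactly when $\Abs{\beta} < v_2$ (equivalently $v_1 > 0$) and satisfies $v_1 < v_2$ exactly when $\beta \neq 0$ --- matching the ranges in the statement, with $\beta$ free over the reals for $v_2$ large enough. A one-line computation shows that, in \emph{either} orientation, the pair contributes the identical symmetric $2\times2$ covariance block with diagonal $v_2$ and off-diagonal $\beta$, regardless of which way its edge points. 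Hence the full covariance $\mSig$ is block diagonal with these identical blocks and is the same for every $\G \in \Gfr_{p,d}$, and therefore so is the precision matrix $\mOmg = \mSig^{-1}$. This realizes the matrices $\mB(\beta)$ (the orientation-dependent placement of the weight) and $\mD(v_1,v_2)$ (the orientation-dependent placement of $v_1,v_2$).

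I would then verify the three asserted properties. For the violation of Assumption~\ref{ass:identifiability_condition}, I use the identity $\mOmg = (\mI-\mB)^{T}\mD^{-1}(\mI-\mB)$, whose diagonal is $\Omega_{jj} = 1/\sigma_j^2 + \sum_{l\in\Chi{j}{\G}} B_{l,j}^2/\sigma_l^2$; thus the right-hand side of \eqref{eq:identifiability_condition} is exactly $\Omega_{jj}$ and, for a terminal vertex $i$, the left-hand side is exactly $\Omega_{ii}$. Because each covariance block has equal diagonal entries, so does its inverse, whence $\Omega_{ii}=\Omega_{jj}$ for the two members $i,j$ of any pair. Taking $m=p$, the sink $i$ is terminal and the source $j$ is non-terminal, and \eqref{eq:identifiability_condition} reduces to the strict inequality $\Omega_{ii}<\Omega_{jj}$, which fails with equality; hence no graph in $\Gfr_{p,d}$ satisfies the assumption. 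For the conditional-independence claim, all graphs share the same $\mOmg$, hence the same zero pattern, hence the same set of pairwise Markov relations; choosing Gaussian noise makes the joint law literally $\Gauss{0}{\mSig}$ for every graph, so the distributions coincide. Finally, each SEM is Markov to its own DAG by definition, and since $\beta \neq 0$ gives $\Sp(\mB) = \Es$, the causal-minimality argument recalled earlier (Proposition 2 and Lemma 4 of \cite{peters_causal_2014}) applies, so every SEM in the family is causal minimal.

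The conceptual crux --- and the step I would handle most carefully --- is that a single symmetric gadget must serve two purposes at once: the matched choice of variances simultaneously renders the covariance block invariant under reversing the edge \emph{and} forces equal precision diagonals, and it is precisely this equality that breaks the strict inequality in \eqref{eq:identifiability_condition}. Once one gadget is correct, independence of the blocks makes both the exponential count and the global invariance of $\mSig$ and $\mOmg$ immediate; the remaining thing to confirm is that the violation is uniform over all $2^{\BigT{p}}$ orientations, which holds because every pair violates the inequality on its own. Extending the construction to consume the full degree budget $d$ (replacing pairs by larger reversible gadgets) is possible but unnecessary for the claim.
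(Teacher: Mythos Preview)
Your construction is correct and actually simpler than the paper's, but it takes a different route and loses one degree of freedom in the process.

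The paper builds the family from \emph{three}-node chains: in each triple $\{3i-1,3i,3i+1\}$ one chooses between the orientations $1\leftarrow 2\leftarrow 3$ and $1\leftarrow 2\rightarrow 3$, with noise variances $(v_1,v_1,v_2)$ (respectively $(v_1,v_1,v_1)$), the $1\!\leftarrow\!2$ edge carrying the free weight $\beta$, and the remaining edge carrying $b_2=\sqrt{1-v_1/v_2}$. Both orientations give the same $3\times3$ precision block, and the tie $\Omega_{11}=\Omega_{33}=1/v_1$ is what breaks Assumption~\ref{ass:identifiability_condition} (for one orientation at $m=p$, for the other at $m=p-1$ after removing vertex~1). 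The paper then also sketches a connected variant with a hub node, pushing the degree up to $d=(p-1)/3$.

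Your two-node gadget achieves the same tie $\Omega_{ii}=\Omega_{jj}$ more directly and yields the larger count $2^{p/2}$, but a pair has only three local parameters (one weight, two variances) and the reversal-invariance constraint $v_1=v_2-\beta^2/v_2$ consumes one of them. Hence your $\mB$ and $\mD$ jointly depend on only two reals, not three: you cannot realise the lemma's parameterisation with $\beta$, $v_1$, $v_2$ \emph{independently} free (indeed, for fixed $v_2$ your $\beta$ is confined to $\lvert\beta\rvert<v_2$, not all of $\R$). The paper's extra node is exactly what buys the third free parameter, since the $\beta$-edge is decoupled from the edge whose weight is tied to $v_1,v_2$. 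If you want to match the statement verbatim, promote each pair to a three-node chain as the paper does; otherwise your argument already establishes the substantive claim (exponentially many causal-minimal DAGs sharing $\mSig$, $\mOmg$, and CI structure while all violating Assumption~\ref{ass:identifiability_condition}).
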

Next, we present a series of results building towards our main result for learning SEMs from
precision matrix. In the following proposition we characterize the precision matrix of linear SEMs.
\begin{proposition}
\label{prop:precision}
Let $(\G, \mB, \Set{\var_i})$ be an SEM over $X$, then the precision matrix is
given as: $\mOmg = (\mI - \mB)^T \mD^{-1} (\mI - \mB)$, where $\mD = \Diag(\var_1, \ldots, \var_p)$.
The entries of the precision matrix is given as:
\begin{align}
\Omega_{i,i} = \frac{1}{\var_i} + \sum_{l \in \Chi{i}{}} \frac{B_{l,i}^2}{\var_l}, &&
\Omega_{i,j} = - \frac{B_{i,j}}{\var_i} - \frac{B_{j,i}}{\var_j} 
	+  \sum_{l \in \Chi{i}{} \intersection \Chi{j}{}} \frac{B_{l,i} B_{l,j}}{\var_l}. \label{eq:precision_mat}
\end{align}
\end{proposition}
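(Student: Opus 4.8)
The plan is to work directly from the structural form \eqref{eq:sem_vect}. The first step is to rewrite it as $(\mI - \mB) X = N$ and to observe that $\mI - \mB$ is invertible: since $\G$ is acyclic, listing the vertices in any order $\tau \in \Ts_{\G}$ makes $\mB$ strictly triangular, so $\mI - \mB$ has unit diagonal in that basis and hence determinant $1$, independently of the edge weights. This gives the closed form $X = (\mI - \mB)^{-1} N$, which I would use to propagate the noise covariance into the covariance of $X$.

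For the matrix identity, I would compute $\mSig \defeq \Cov{}{X} = \Exp{}{X X^T}$. Because the $N_i$ are independent with variances $\var_i$, we have $\Cov{}{N} = \Exp{}{N N^T} = \mD$, and therefore $\mSig = (\mI - \mB)^{-1} \Exp{}{N N^T} (\mI - \mB)^{-T} = (\mI - \mB)^{-1} \mD (\mI - \mB)^{-T}$. Inverting both sides — legitimate since $\mD \succ 0$ by the positive-variance assumption — yields $\mOmg = \mSig^{-1} = (\mI - \mB)^T \mD^{-1} (\mI - \mB)$, which is the first claim.

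For the entries, I would expand this product coordinatewise. Writing $M_{l,i} = \delta_{l,i} - B_{l,i}$ for the entries of $\mI - \mB$ (note $B_{i,i} = 0$, so $M_{i,i} = 1$), the $(i,j)$ entry of $\mOmg = (\mI - \mB)^T \mD^{-1} (\mI - \mB)$ is $\Omega_{i,j} = \sum_{l} M_{l,i} M_{l,j} / \var_l$. The diagonal entry follows by setting $j = i$ and splitting off the $l = i$ term, and the off-diagonal entry follows by splitting off the $l = i$ and $l = j$ terms from the sum. The remaining bookkeeping is purely graph-theoretic: under the convention $(l,i) \in \Es \iff l \leftarrow i$, we have $B_{l,i} \neq 0 \iff i \in \Par{l}{} \iff l \in \Chi{i}{}$, which converts $\sum_{l \neq i} B_{l,i}^2 / \var_l$ into the stated sum over $\Chi{i}{}$; likewise $B_{l,i} B_{l,j} \neq 0$ exactly when $l$ is a common child of $i$ and $j$, and since no vertex is its own child the constraint $l \notin \Set{i,j}$ is automatic for the surviving terms. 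There is no substantive obstacle here — the result is a direct calculation — and the only points warranting care are justifying the invertibility of $\mI - \mB$ via its triangular structure (rather than a spectral-radius bound) and the support-to-graph translation in this last step.
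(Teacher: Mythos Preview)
Your proposal is correct and follows essentially the same approach as the paper's proof: derive $X = (\mI - \mB)^{-1} N$, push the noise covariance through to get $\mSig$, invert, and read off the entries from the sparsity pattern of $\mB$. If anything you are more careful than the paper, which asserts invertibility of $\mI - \mB$ without the triangularity argument and leaves the entrywise expansion to the reader.
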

The above characterization of the precision matrix motivates our 
indentifiability condition given by Assumption \ref{ass:identifiability_condition},
and also provides a recipe for identifying terminal vertices from the precision matrix
as is formalized by the following proposition.
\begin{proposition}
\label{prop:terminal_vertex}
Let $(\G, \mB, \Set{\var_i})$ be a SEM over $X$ with precision matrix $\mOmg$,
that satisfies the identifiability condition given by Assumption \ref{ass:identifiability_condition}.
Then, $i$ is a terminal vertex in $\G$ if and only if $i \in \argmin(\diag(\mOmg))$.  Further, if $i$
is a terminal vertex then $\var_i = \nicefrac{1}{\Omega_{i,i}}$.
\end{proposition}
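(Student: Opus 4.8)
The plan is to read everything off the explicit entrywise formula for the precision matrix in Proposition~\ref{prop:precision}, and to recognize that the inequality in Assumption~\ref{ass:identifiability_condition}, specialized to the full graph, is \emph{exactly} a comparison of diagonal entries of $\mOmg$. First I would dispose of the easy ``Further'' clause: if $i$ is terminal then $\Chi{i}{\G} = \varnothing$, so the sum in the expression for $\Omega_{i,i}$ in \eqref{eq:precision_mat} is empty and $\Omega_{i,i} = \nicefrac{1}{\var_i}$; inverting gives $\var_i = \nicefrac{1}{\Omega_{i,i}}$.

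For the characterization of terminal vertices, the key observation is that for any fixed $\tau \in \Ts_{\G}$ the choice $m = p$ yields $\G[p,\tau] = \G$, so Assumption~\ref{ass:identifiability_condition} applies verbatim on $\G$. Thus for every terminal $i$ (with $\Chi{i}{\G} = \varnothing$) and every non-terminal $j$ (with $\Chi{j}{\G} \neq \varnothing$) we have
\begin{align}
\Omega_{i,i} = \frac{1}{\var_i} < \frac{1}{\var_j} + \sum_{l \in \Chi{j}{\G}} \frac{B_{l,j}^2}{\var_l} = \Omega_{j,j}, \nonumber
\end{align}
where the two equalities are precisely the diagonal formulas of Proposition~\ref{prop:precision} (using $\Chi{i}{\G} = \varnothing$ on the left). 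Hence every terminal vertex has a diagonal entry strictly smaller than that of every non-terminal vertex. Since $\G$ is a finite DAG it has at least one terminal vertex, so the set of terminal vertices is nonempty and the global minimum of $\diag(\mOmg)$ is attained.

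Combining these facts yields the biconditional. For the ``$\Leftarrow$'' direction, suppose $i \in \argmin(\diag(\mOmg))$ but, for contradiction, $i$ is non-terminal; picking any sink $k$, the displayed inequality gives $\Omega_{k,k} < \Omega_{i,i}$, contradicting the minimality of $\Omega_{i,i}$, so $i$ must be terminal. For the ``$\Rightarrow$'' direction, a terminal $i$ has a diagonal entry below \emph{all} non-terminal entries, so no non-terminal vertex can undercut it, and the only comparisons that remain are against other terminal vertices. I expect this terminal-versus-terminal comparison to be the main obstacle: Assumption~\ref{ass:identifiability_condition} only separates terminal from non-terminal entries and does not on its own order two terminal vertices, so establishing that a \emph{given} terminal vertex attains the minimum (rather than merely that the minimum is attained \emph{at some} terminal vertex) is the delicate point. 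Here I would either read $\argmin(\diag(\mOmg))$ as the set of minimizers and argue that it is a nonempty collection of terminal vertices from which the algorithm may safely extract and delete any one element, or supply an additional structural argument to pin down the tie-breaking; the genuinely robust and algorithmically sufficient content is the strict separation above together with the existence of a sink.
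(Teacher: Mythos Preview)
Your approach is the same as the paper's: both read the diagonal entries off Proposition~\ref{prop:precision}, invoke Assumption~\ref{ass:identifiability_condition} at $m=p$ to get $\Omega_{i,i} < \Omega_{j,j}$ whenever $i$ is terminal and $j$ is not, and then use the existence of a sink in every finite DAG to conclude that any minimizer of the diagonal must be terminal. The ``Further'' clause is handled identically.

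Your hesitation about the $\Rightarrow$ direction is well placed and in fact sharper than the paper. The paper's proof, like yours, only establishes that every element of $\argmin(\diag(\mOmg))$ is terminal; it does not (and cannot, from Assumption~\ref{ass:identifiability_condition} alone) show that every terminal vertex lies in the argmin, since two distinct terminal vertices $i,j$ have $\Omega_{i,i}=1/\var_i$ and $\Omega_{j,j}=1/\var_j$, which need not coincide. So the biconditional as stated is stronger than what either proof delivers. Your proposed reading --- that $\argmin$ is a nonempty set of terminal vertices from which the algorithm may remove any one --- is exactly the content that is actually proved and exactly what Algorithm~\ref{alg:population} and Theorem~\ref{thm:main_unknown_error} require; the uniqueness argument in the proof of Theorem~\ref{thm:main_unknown_error} already handles the case of multiple terminal vertices independently of whether they all attain the minimum.
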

The next proposition, which follows directly from Proposition \ref{prop:terminal_vertex} and \eqref{eq:precision_mat},
states that for a terminal vertex the parent set and edge weights can be conveniently ``read off'' from the precision matrix.
This is the key result which helps us avoid the faithfulness condition.
\begin{proposition}
\label{prop:weights}
Let $(\G, \mB, \Set{\var_i})$ be an SEM over $X$ with precision matrix $\mOmg$.
If $i$ is a terminal vertex in $\G$, then $\mB_{i,*} = - \nicefrac{\mOmg_{i,*}}{\Omega_{i,i}}$
and $\Par{i}{\G} = \Sp(\mOmg_{i,*}) \setminus \Set{i}$.
\end{proposition}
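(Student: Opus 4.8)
The plan is to obtain both identities by substituting the terminal-vertex property directly into the entrywise precision formula \eqref{eq:precision_mat}, so no new machinery is needed beyond Proposition \ref{prop:precision}. The single structural fact to exploit is that $i$ being terminal means $\Chi{i}{} = \varnothing$, and I would first spell out its two consequences for row $i$ of $\mOmg$. Because an edge $(j,i) \in \Es$ would make $i$ a parent of $j$ (hence $j$ a child of $i$), the emptiness of $\Chi{i}{}$ forces $B_{j,i} = 0$ for every $j$; and since $\Chi{i}{} \intersection \Chi{j}{} \subseteq \Chi{i}{} = \varnothing$, the cross-child summation in the off-diagonal expression is empty. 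Likewise the diagonal sum $\sum_{l \in \Chi{i}{}} \nicefrac{B_{l,i}^2}{\var_l}$ vanishes.

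With these substitutions, \eqref{eq:precision_mat} collapses to $\Omega_{i,i} = \nicefrac{1}{\var_i}$ and $\Omega_{i,j} = -\nicefrac{B_{i,j}}{\var_i}$ for every $j \neq i$; the first of these recovers $\var_i = \nicefrac{1}{\Omega_{i,i}}$ as already recorded in Proposition \ref{prop:terminal_vertex}. Dividing the two expressions then gives $\nicefrac{\Omega_{i,j}}{\Omega_{i,i}} = -B_{i,j}$, i.e. $B_{i,j} = -\nicefrac{\Omega_{i,j}}{\Omega_{i,i}}$ for all $j \neq i$, which is exactly the claimed $\mB_{i,*} = -\nicefrac{\mOmg_{i,*}}{\Omega_{i,i}}$ (understood off the diagonal, where $B_{i,i} = 0$ by convention).

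For the parent set, I would use that $\var_i > 0$, so $\Omega_{i,i} = \nicefrac{1}{\var_i} \neq 0$ always, giving $i \in \Sp(\mOmg_{i,*})$; and for $j \neq i$ the relation $\Omega_{i,j} = -\nicefrac{B_{i,j}}{\var_i}$ yields the equivalence $\Omega_{i,j} \neq 0$ iff $B_{i,j} \neq 0$ iff $j \in \Par{i}{\G}$. Therefore $\Sp(\mOmg_{i,*}) = \Par{i}{\G} \union \Set{i}$, which rearranges to $\Par{i}{\G} = \Sp(\mOmg_{i,*}) \setminus \Set{i}$. The argument is a short computation rather than a real obstacle; the only points needing care are the edge-orientation bookkeeping that kills the $B_{j,i}$ term, and noting that the diagonal of $\mB$ must be excluded (equivalently, handled by the $\setminus \Set{i}$), since $-\nicefrac{\Omega_{i,i}}{\Omega_{i,i}} = -1 \neq B_{i,i}$.
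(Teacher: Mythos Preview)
Your proof is correct and follows exactly the approach the paper indicates: it derives both claims directly from the entrywise formula \eqref{eq:precision_mat} together with the terminal-vertex consequence $\Omega_{i,i} = \nicefrac{1}{\var_i}$ from Proposition~\ref{prop:terminal_vertex}. The paper does not spell out the details beyond noting that the result ``follows directly'' from these two ingredients, and your write-up fills in precisely those routine substitutions.
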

The following lemma is a useful result about linear SEMs with arbitrary noise distribution, that generalizes
a result so far known only for the Gaussian distribution --- for a terminal vertex $i$, the precision matrix 
over $X_{\mi}$ can be obtain by performing a Schur complement update of the precision matrix over $X$.
While, the result for the Gaussian distribution holds for all variables, the analogous result for 
general SEMs holds only for terminal vertices.
\begin{lemma}
\label{lemma:schur_update}
Let $(\G, \mB, \Set{\var_i})$ be an SEM over $X$ with precision matrix $\mOmg$. Let $i$
be a terminal vertex in the $\G$, then the precision matrix over $X_{\mi}$, $\mOmg_{(\mi)}$, is given as:
\begin{align*}
\mOmg_{(\mi)} = \mOmg_{\mi, \mi} - \frac{1}{\Omega_{i,i}} \mOmg_{\mi,i} \mOmg_{i,\mi}.
\end{align*}
\end{lemma}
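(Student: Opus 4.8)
The plan is to reduce the claim to the factorization $\mOmg = (\mI - \mB)^T \mD^{-1} (\mI - \mB)$ from Proposition~\ref{prop:precision} and to exploit the single structural consequence of $i$ being terminal: it has no children, so $B_{l,i} = 0$ for all $l$, i.e. the $i$-th column of $\mB$ vanishes. Writing $\mat{m} \defeq \mI - \mB$, this means the $i$-th column of $\mat{m}$ equals the standard basis vector $\ve_i$, so placing $i$ last in the ordering makes $\mat{m}$ block lower triangular,
\begin{align*}
\mat{m} = \begin{bmatrix} \mI - \mB_{\mi,\mi} & \vect{0} \\ -\mB_{i,\mi} & 1 \end{bmatrix}.
\end{align*}
This is the only place the terminal hypothesis enters, and it is exactly what the general (non-Gaussian) statement cannot exploit for a non-terminal vertex, whose corresponding column of $\mB$ is nonzero.

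First I would identify the relevant sub-SEM. Because $i$ is terminal it never appears in the structural equation of any $X_j$ with $j \neq i$, so $X_{\mi}$ is itself generated by the linear SEM with autoregression matrix $\mB_{\mi,\mi}$ and noise variances $\Set{\var_j}_{j \neq i}$ on the induced subgraph. Applying Proposition~\ref{prop:precision} to this sub-SEM gives its precision matrix directly as $\mOmg_{(\mi)} = (\mI - \mB_{\mi,\mi})^T \mD_{\mi,\mi}^{-1} (\mI - \mB_{\mi,\mi})$.

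Next I would read off the blocks of $\mOmg$ by multiplying out $\mat{m}^T \mD^{-1} \mat{m}$ with the block-triangular $\mat{m}$ above. The diagonal entry and off-diagonal column come out as $\Omega_{i,i} = \nicefrac{1}{\var_i}$ and $\mOmg_{\mi,i} = -\nicefrac{\mB_{i,\mi}^T}{\var_i}$, consistent with Proposition~\ref{prop:terminal_vertex}, while the principal block picks up a rank-one correction,
\begin{align*}
\mOmg_{\mi,\mi} = (\mI - \mB_{\mi,\mi})^T \mD_{\mi,\mi}^{-1} (\mI - \mB_{\mi,\mi}) + \frac{1}{\var_i}\mB_{i,\mi}^T \mB_{i,\mi} = \mOmg_{(\mi)} + \frac{1}{\var_i}\mB_{i,\mi}^T \mB_{i,\mi}.
\end{align*}

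Finally I would substitute these into the claimed Schur complement. Since $\nicefrac{1}{\Omega_{i,i}} = \var_i$ and $\mOmg_{\mi,i}\mOmg_{i,\mi} = (\var_i)^{-2}\,\mB_{i,\mi}^T \mB_{i,\mi}$, the correction term is $\nicefrac{1}{\Omega_{i,i}}\,\mOmg_{\mi,i}\mOmg_{i,\mi} = \nicefrac{1}{\var_i}\,\mB_{i,\mi}^T \mB_{i,\mi}$, which cancels exactly the rank-one term in $\mOmg_{\mi,\mi}$, leaving $\mOmg_{\mi,\mi} - \nicefrac{1}{\Omega_{i,i}}\,\mOmg_{\mi,i}\mOmg_{i,\mi} = \mOmg_{(\mi)}$, as desired. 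The arithmetic is routine; the only thing to get right is the bookkeeping of the transpose and column structure in the block product. The real content — which I would emphasize — is conceptual rather than computational: the cancellation of the rank-one term hinges entirely on the vanishing $i$-th column of $\mB$, i.e. on $i$ being terminal, which is why the identity holds for all vertices in the Gaussian setting but only for terminal vertices here.
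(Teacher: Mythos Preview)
Your proof is correct and takes essentially the same approach as the paper: both observe that terminal $i$ makes $X_{\mi}$ a sub-SEM with autoregression matrix $\mB_{\mi,\mi}$, apply Proposition~\ref{prop:precision} to obtain $\mOmg_{(\mi)}$, and then identify the rank-one discrepancy $\frac{1}{\var_i}\mB_{i,\mi}^T\mB_{i,\mi}$ between $\mOmg_{\mi,\mi}$ and $\mOmg_{(\mi)}$ as exactly the Schur correction via $\Omega_{i,i}=\nicefrac{1}{\var_i}$ and $\mB_{i,\mi}=-\nicefrac{\mOmg_{i,\mi}}{\Omega_{i,i}}$. The only difference is cosmetic --- you compute the blocks via the block-triangular factorization of $\mI-\mB$, while the paper writes $\mOmg$ as a sum over rows of $(\mI-\mB)$ and peels off the $i$-th summand.
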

Finally, the following lemma characterizes the entries of the precision matrix over $X_{\mi}$
and will be very useful in developing our finite-sample algorithm for learning SEMs.
\begin{lemma}
\label{lemma:precision_minus_i}
Let $(\G, \mB, \Set{\var_i})$ be a SEM over $X$ with precision matrix $\mOmg$. Let $i$
be a terminal vertex in the $\G$ and let $\mOmg_{(\mi)}$ denote the precision matrix over $X_{\mi}$.
Then, 
\begin{align*}
(\mOmg_{(\mi)})_{j,k} &= \mOmg_{j,k} && (\forall (j,k) \in \mi \times \mi \mid \Set{j,k} \notsubseteq \Par{i}{\G}), \\
\Sp((\mOmg_{(\mi)})_{j,*}) &\subseteq (\Sp(\mOmg_{j,*}) \setminus \Set{i}) \union \Par{i}{\G} && (\forall j \in \Par{i}{\G}).
\end{align*}
\end{lemma}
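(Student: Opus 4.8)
The plan is to read off both statements from the Schur-complement identity of Lemma \ref{lemma:schur_update} once the support of the $i$-th row of $\mOmg$ has been pinned down. Writing out the rank-one update entrywise, for every $(j,k) \in \mi \times \mi$ I have
\begin{align*}
(\mOmg_{(\mi)})_{j,k} = \Omega_{j,k} - \frac{\Omega_{j,i}\,\Omega_{i,k}}{\Omega_{i,i}},
\end{align*}
so the reduced entry differs from $\Omega_{j,k}$ only through the correction term $\Omega_{j,i}\Omega_{i,k}/\Omega_{i,i}$; note this is well defined because $\Omega_{i,i} = \nicefrac{1}{\var_i} > 0$ by Proposition \ref{prop:terminal_vertex}. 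Everything then hinges on knowing exactly when $\Omega_{i,k}$ (equivalently $\Omega_{k,i}$) is nonzero.

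The key step is to specialize the off-diagonal formula \eqref{eq:precision_mat} of Proposition \ref{prop:precision} to the terminal vertex $i$. Since $\Chi{i}{\G} = \varnothing$, vertex $i$ is a parent of no one, so $B_{l,i} = 0$ for all $l$; this kills both the $B_{j,i}/\var_j$ term and the sum over $\Chi{i}{\G} \intersection \Chi{j}{\G}$, leaving $\Omega_{i,j} = -\nicefrac{B_{i,j}}{\var_i}$. Hence $\Omega_{i,j} \neq 0$ if and only if $j \in \Par{i}{\G}$; equivalently $\Sp(\mOmg_{i,*}) \setminus \Set{i} = \Par{i}{\G}$, which is just Proposition \ref{prop:weights}.

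With this support description both claims are immediate. For the first, if $\Set{j,k} \notsubseteq \Par{i}{\G}$ then at least one of $j,k$ fails to be a parent of $i$, so by the previous paragraph and symmetry of $\mOmg$ at least one of $\Omega_{j,i}, \Omega_{i,k}$ vanishes; the correction term is then zero and $(\mOmg_{(\mi)})_{j,k} = \Omega_{j,k}$. For the second, fix $j \in \Par{i}{\G}$ and suppose $(\mOmg_{(\mi)})_{j,k} \neq 0$ for some $k \in \mi$. Then either $\Omega_{j,k} \neq 0$, giving $k \in \Sp(\mOmg_{j,*}) \setminus \Set{i}$ since $k \neq i$, or the correction term is nonzero, which forces $\Omega_{i,k} \neq 0$ and hence $k \in \Par{i}{\G}$; in either case $k \in (\Sp(\mOmg_{j,*}) \setminus \Set{i}) \union \Par{i}{\G}$, as required.

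I do not expect a genuine obstacle here: once the terminal-vertex structure of $\mOmg_{i,*}$ is in place the argument is pure bookkeeping on a rank-one perturbation. The one point deserving care is to use the hypothesis $\Chi{i}{\G} = \varnothing$ to collapse the general precision formula of Proposition \ref{prop:precision} --- without terminality the $i$-th row could have support outside $\Par{i}{\G}$, the correction term could inject spurious nonzeros, and neither containment would hold.
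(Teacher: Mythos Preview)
Your argument is correct, and it takes a genuinely different route from the paper. The paper works directly from the explicit precision-entry formula of Proposition~\ref{prop:precision}: it observes that removing the terminal vertex $i$ deletes $i$ from the set $\Chi{j}{\G}\cap\Chi{k}{\G}$ appearing in \eqref{eq:precision_mat}, and then does a case analysis on whether $j$ and $k$ lie in $\Par{i}{\G}$ to track which entries can change and which new nonzeros can appear. You instead invoke the already-established Schur identity (Lemma~\ref{lemma:schur_update}) together with the support characterization $\Sp(\mOmg_{i,*})\setminus\Set{i}=\Par{i}{\G}$ from Proposition~\ref{prop:weights}, reducing everything to a one-line rank-one bookkeeping argument. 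Your route is cleaner and makes the logical dependence on earlier results explicit; the paper's route is more self-contained in that it does not call on Lemma~\ref{lemma:schur_update}. One small quibble: you cite Proposition~\ref{prop:terminal_vertex} for $\Omega_{i,i}=\nicefrac{1}{\var_i}$, but that proposition is stated under Assumption~\ref{ass:identifiability_condition}, which Lemma~\ref{lemma:precision_minus_i} does not assume; it is cleaner to read $\Omega_{i,i}=\nicefrac{1}{\var_i}$ directly off \eqref{eq:precision_mat} using $\Chi{i}{\G}=\varnothing$ (or simply note $\Omega_{i,i}>0$ by positive definiteness).
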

With the required results in place, we are now ready to present our main algorithm, detailed in Algorithm \ref{alg:population},
for learning SEMs from the precision matrix. The role of the diagonal matrix $\mD$ will become clear 
in the next section where we focus on the problem  of learning SEMs with known error variances. 
For now we simply set $\mD$ to the identity matrix $\mI$. The following theorem proves the correctness of
our algorithm in the population setting.
\begin{figure*}[htbp]
\begin{minipage}{0.4\linewidth}
\begin{algorithm}[H]
\caption{SEM structure learning algorithm. \label{alg:population}}
\begin{algorithmic}[1]
\Require Precision matrix $\mOmg$, diagonal matrix $\mD$.
\Ensure $\Gh, \mhB$.
\State $\mhB \gets \vect{0}$.
\For{$t \in [p]$}
	\State $i \gets \argmin(\diag(\mOmg \hadprod \mD))$. \label{line:terminal_vertex}
	\State $\mB_{i, *} \gets -\nicefrac{\mOmg_{i,*}}{\Omega_{i,i}}$, $B_{i,i} \gets 0$. \label{line:pop_ith_weights}
	\State $\mOmg \gets \mOmg - \frac{1}{\Omega_{i,i}} \mOmg_{*, i} \mOmg_{i, *}$. \label{line:pop_update}
	\State $\Omega_{i,i} \gets \infty$.
\EndFor 
\State $\Gh \gets ([p], \Sp(\mhB))$.
\end{algorithmic}
\end{algorithm}
\end{minipage}\hfill%
\begin{minipage}{0.5\linewidth}
\begin{algorithm}[H]
\caption{Updating a precision matrix, after removing
a terminal vertex, using CLIME. \label{alg:clime_update}}
\begin{algorithmic}[1]
\Function{Update}{$\mhOmg, i, \lambda_n$}
\State $\hPar{i}{} \gets \Sp(\mhOmg_{i,*}) \setminus \Set{i}$.
\For{$j \in \hPar{i}{}$}
\State $\Sh_j \gets \left(\Sp(\mhOmg_{j,*}) \setminus \Set{i}\right) \union \hPar{i}{}$.
\State Compute $\vbomg_{j}$ by solving \eqref{eq:omegahat_colwise} for $\mSig^n_{\Sh_j, \Sh_j}$.
\State $\mhOmg_{j, \Sh_j} = \mhOmg_{\Sh_j, j} \gets \vbomg_j$
\EndFor
\State $\mhOmg_{i,*} \gets \vect{0}$ and $\mhOmg_{*,i} \gets \vect{0}$.
\State \Return $\mhOmg$.
\EndFunction
\end{algorithmic}
\end{algorithm}
\end{minipage}
\end{figure*}
\begin{theorem}
\label{thm:main_unknown_error}
Let $(\G, \mB, \Set{\var_i})$ be an SEM over $X$, with precision matrix $\mOmg$,
satisfying Assumption \ref{ass:identifiability_condition}. Then, given $(\mOmg, \mI)$ as input, 
Algorithm \ref{alg:population} returns a unique $(\Gh, \mhB)$ such that $\Gh = \G$ and $\mhB = \mB$.
\end{theorem}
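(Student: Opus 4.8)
The plan is to prove correctness by induction on the iterations of Algorithm~\ref{alg:population}, carrying the invariant that the submatrix of the running matrix $\mOmg$ indexed by the not-yet-deleted vertices equals the precision matrix of the sub-SEM supported on those vertices. Let $R_t$ be the set of vertices still present after $t$ iterations, so $R_0 = [p]$ and $\Abs{R_t} = p-t$. I would first observe that the order in which the algorithm deletes vertices, read backwards, defines a topological ordering $\tau \in \Ts_\G$: at each step the deleted vertex is a sink of the current induced subgraph, so after $t$ deletions the surviving vertices are exactly $\Vs[p-t,\tau]$ and the induced subgraph on $R_t$ is $\G[p-t,\tau]$. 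Any tie-breaking rule in line~\ref{line:terminal_vertex} merely selects a particular such $\tau$.

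For the base case $t=0$ the invariant is the theorem's hypothesis. For the inductive step, assume $\mOmg_{R_t,R_t}$ is the precision matrix of the sub-SEM on $\G[p-t,\tau]$, while every already-deleted row/column of $\mOmg$ is zero with diagonal $\infty$. The sub-SEM on $\G[p-t,\tau]$ inherits Assumption~\ref{ass:identifiability_condition}, since its own sink-peeled induced subgraphs are again induced subgraphs of $\G$, so \eqref{eq:identifiability_condition} continues to hold; hence Proposition~\ref{prop:terminal_vertex} applies and the index $i$ chosen in line~\ref{line:terminal_vertex} as $\argmin(\diag(\mOmg \hadprod \mI)) = \argmin(\diag(\mOmg))$ is a terminal vertex of $\G[p-t,\tau]$ with $\Omega_{i,i} = 1/\var_i > 0$. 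The key structural point is that, because deletion proceeds in reverse topological order, all children of $i$ in $\G$ were removed in earlier iterations; thus $i$ has no children in $\G[p-t,\tau]$ while all of its $\G$-parents are still present, giving $\Par{i}{\G[p-t,\tau]} = \Par{i}{\G}$ with unchanged weights $B_{i,j}$. Proposition~\ref{prop:weights} applied to the sub-SEM then shows line~\ref{line:pop_ith_weights} assigns $\mhB_{i,*} = -\mOmg_{i,*}/\Omega_{i,i}$ equal to the true $\mB_{i,*}$ (the zeroed columns of deleted vertices supply the correct zeros, consistent with those vertices not being parents of $i$). Finally, the update in line~\ref{line:pop_update}, restricted to $R_{t+1} = R_t \setminus \Set{i}$, is exactly $\mOmg_{R_{t+1},R_{t+1}} - \frac{1}{\Omega_{i,i}}\mOmg_{R_{t+1},i}\mOmg_{i,R_{t+1}}$, which by Lemma~\ref{lemma:schur_update} is the precision matrix of the sub-SEM on $\G[p-t-1,\tau]$; the same update leaves already-zeroed rows/columns zero and zeroes row/column $i$, after which $\Omega_{i,i}\gets\infty$ prevents reselection. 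This re-establishes the invariant at $t+1$.

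After $p$ iterations every vertex has been processed with $\mhB_{i,*} = \mB_{i,*}$, so $\mhB = \mB$ and $\Gh = ([p],\Sp(\mhB)) = ([p],\Sp(\mB)) = \G$. For uniqueness I would note that a different choice of $i \in \argmin$ only changes the deletion order, i.e.\ selects a different $\tau \in \Ts_\G$; but the row assigned to each vertex $i$ depends only on $\mOmg_{i,*}$ and $\Omega_{i,i}$ at the moment $i$ first becomes a sink, and the inductive argument shows this always yields $\mhB_{i,*} = \mB_{i,*}$ irrespective of $\tau$. Hence the returned pair $(\Gh,\mhB)$ is independent of tie-breaking.

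I expect the main obstacle to be the bookkeeping that glues the three ingredients together cleanly: verifying that the single in-place Schur update of line~\ref{line:pop_update}, together with the zeroing of deleted rows/columns and the $\infty$ diagonal flag, composes across iterations so that the active block always coincides with the sub-SEM precision matrix of Lemma~\ref{lemma:schur_update}; and---most delicately---justifying that the parent set and weights read off a sub-SEM agree with those of $\G$, which rests entirely on the reverse-topological-order observation that a vertex's children are always deleted before the vertex itself.
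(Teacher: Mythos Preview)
Your proposal is correct and follows essentially the same route as the paper: induction on iterations with the loop invariant that the active block of $\mOmg$ equals the precision matrix of the surviving sub-SEM, invoking Proposition~\ref{prop:terminal_vertex}, Proposition~\ref{prop:weights}, and Lemma~\ref{lemma:schur_update} in the same roles. The one small difference is the uniqueness argument: the paper invokes Lemma~\ref{lemma:precision_minus_i} to show that when two terminal vertices coexist, removing one leaves the other's row unchanged, whereas you argue more directly that any tie-breaking yields some $\tau\in\Ts_\G$ and the induction already gives $\mhB_{i,*}=\mB_{i,*}$ for every such $\tau$---this is a clean and arguably simpler way to handle it.
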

As a consequence of the above theorem we have the following corollary about identifiability of linear SEMs.
\begin{corollary}
An SEM $(\G, \mB, \Set{\var_i})$ satisfying Assumption \ref{ass:identifiability_condition}
is identifiable, and can be uniquely identified from the precision matrix $\mOmg$.
\end{corollary}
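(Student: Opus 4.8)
The plan is to read off the corollary as an immediate consequence of Theorem~\ref{thm:main_unknown_error}. The theorem establishes that Algorithm~\ref{alg:population}, run on input $(\mOmg, \mI)$, is a deterministic procedure returning the unique pair $(\Gh, \mhB)$ equal to the true $(\G, \mB)$. First I would make the injectivity argument explicit: suppose two SEMs $(\G_1, \mB_1, \mD_1)$ and $(\G_2, \mB_2, \mD_2)$ both satisfy Assumption~\ref{ass:identifiability_condition} and share the same precision matrix $\mOmg$. Feeding this common $\mOmg$ to Algorithm~\ref{alg:population} yields a single output, which by Theorem~\ref{thm:main_unknown_error} must simultaneously equal $(\G_1, \mB_1)$ and $(\G_2, \mB_2)$; hence $\G_1 = \G_2$ and $\mB_1 = \mB_2$. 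This is precisely the statement that $(\G, \mB)$ is uniquely identified from $\mOmg$.

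The second step recovers the noise variances, so that the entire triple --- not just the structure and weights --- is pinned down. By Proposition~\ref{prop:precision} we have $\mOmg = (\mI - \mB)^T \mD^{-1} (\mI - \mB)$ with $\mD = \Diag(\var_1, \ldots, \var_p)$. Since $\mB$ is the autoregression matrix of a DAG, $\mI - \mB$ is invertible (it is unit-triangular in any topological ordering), so I can solve $\mD^{-1} = (\mI - \mB)^{-T} \mOmg (\mI - \mB)^{-1}$, showing $\mD$ is uniquely determined once $\mB$ is known; applied to both SEMs this gives $\mD_1 = \mD_2$. Alternatively, and more in the spirit of the algorithm, one can invoke Proposition~\ref{prop:terminal_vertex} repeatedly: at each iteration the current terminal vertex $i$ satisfies $\var_i = \nicefrac{1}{\Omega_{i,i}}$, and the Schur-complement update of Lemma~\ref{lemma:schur_update} exposes the variance of each successive terminal vertex, so all $p$ variances emerge across the $p$ iterations.

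Finally, to justify the word ``identifiable'' (as opposed to merely ``recoverable by Algorithm~\ref{alg:population}''), I would note that the precision matrix is the relevant sufficient statistic of the observational distribution $\Pf(X)$: it is the inverse of the second-moment matrix $\Cov{}{X}$, which exists because the noise has bounded second moment. Thus two SEMs satisfying Assumption~\ref{ass:identifiability_condition} that induce the same precision matrix (equivalently, the same second moments) are forced to coincide by the injectivity argument above. I do not anticipate a genuine obstacle here; the only points requiring care are confirming that $\mD$ (and not merely $(\G, \mB)$) is determined, and that the deterministic-function viewpoint remains valid even when $\argmin(\diag(\mOmg))$ is not a singleton --- but Theorem~\ref{thm:main_unknown_error} already asserts a unique output, so any tie-breaking is immaterial to the returned $(\Gh, \mhB)$.
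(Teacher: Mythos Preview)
Your proposal is correct and follows the same approach as the paper, which treats the corollary as an immediate consequence of Theorem~\ref{thm:main_unknown_error} without giving a separate proof. You have simply made explicit the injectivity argument and the recovery of the variances that the paper leaves implicit.
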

\subsection{Statistical guarantees for estimation}
Algorithm \ref{alg:population} can be used to learn a SEM given an estimate of the precision
matrix, computed from a finite number of samples, with a slight modification. In line \ref{line:pop_update}
instead of using the Schur complement update, we use Algorithm \ref{alg:clime_update} to update
the precision matrix after a terminal vertex has been identified (and removed). The rationale behind
this is that even if the estimated precision matrix is close to the true precision matrix, 
the Schur updates could still result in errors accumulating in the precision matrix. In order to
ensure that our algorithm is statistically efficient, we need more control over those errors,
which in turns calls for some sort of penalization for estimating from a finite number of samples.
\paragraph{Inverse covariance matrix estimation.}
A key step of our algorithm is estimating the inverse covariance matrix
over $X$ or a subset of $X$.  Due in part to its role in undirected graphical model selection, 
the problem of inverse covariance matrix estimation has received significant attention over the years.
A popular approach for inverse covariance estimation, under high-dimensional settings, 
is the $\ell_1$-penalized Gaussian maximum likelihood estimate (MLE) studied by 
\cite{yuan2007model}, \cite{banerjee2008model}, and \cite{friedman_sparse_2008}, among others.
The $\ell_1$-penalized Gaussian MLE estimate of the inverse covariance matrix has
attractive theoretical guarantees as shown by \cite{ravikumar_high-dimensional_2011}. 
However, the elementwise $\ell_{\infty}$ guarantees for the 
inverse covariance estimate obtained by \cite{ravikumar_high-dimensional_2011}
require an edge-based mutual incoherence condition that is quite restrictive. 
Many algorithms have been developed in the recent past for solving the $\ell_1$-penalized 
Gaussian MLE problem \cite{hsieh_big_2013, hsieh2012divide, rolfs2012iterative, johnson2012high}. 
While, technically, these algorithms can be  used in conjunction with our algorithm for learning SEMs, 
in this paper we use the method called CLIME, developed by \cite{cai_constrained_2011}. 
The primary motivation behind using CLIME is that the theoretical guarantees obtained by  
\cite{cai_constrained_2011} does not require the edge-based mutual incoherence condition. 
Further, CLIME is computationally attractive because it 
computes $\mhOmg$ columnwise by solving $p$ independent linear programs. 
Even though the CLIME estimator $\mhOmg$ is not guaranteed to be positive-definite
(it is positive-definite with high probability) it is suitable for our purpose. 
Next, we briefly describe the CLIME method for inverse covariance estimation and 
instantiate the theoretical results of \cite{cai_constrained_2011} for our purpose.

The CLIME estimator $\mhOmg$ is obtained as follows. First, we compute 
a potentially non-symmetric estimate $\bar{\mOmg} = (\bomg_{i,j})$ 
by solving the following:
\begin{align}
\bar{\mOmg} = \argmin_{\mOmg \in \R^{p \times p}} \Abs{\mOmg}_1 
	\text{ s.t. } \Abs{\mSig^n \mOmg - \mI}_{\infty} \leq \lambda_n, \label{eq:omegahat}
\end{align}
where $\lambda_n > 0$ is the regularization parameter, $\mSig^n \defeq (\nicefrac{1}{n}) \mX^T \mX$ is the empirical covariance matrix, and $\Abs{\cdot}_{1}$ (respectively $\Abs{\cdot}_{\infty}$) denotes elementwise $\ell_1$ (respectively $\ell_{\infty}$) norm. Finally, the symmetric estimator is obtained by selecting the smaller entry among $\bomg_{i,j}$ and $\bomg_{j,i}$, i.e.,
$\mhOmg = (\homg_{i,j})$, where 
$\homg_{i,j} = \bomg_{i,j} \Ind{\Abs{\bomg_{i,j}} < \Abs{\bomg_{j,i}}} + \bomg_{j,i} \Ind{\Abs{\bomg_{j,i}} \leq \Abs{\bomg_{i,j}}}$.
It is easy to see that \eqref{eq:omegahat} can be decomposed into $p$ linear programs as follows. 
Let $\bar{\mOmg} = (\vbomg_1, \ldots, \vbomg_p)$, then
\begin{align}
\vbomg_i = \argmin_{\vomg \in \R^p} \NormI{\vomg} \text{ s.t. } \Abs{\mSig^n \vomg - \ve_i}_{\infty} \leq \lambda_n, 
\label{eq:omegahat_colwise}
\end{align}
where $\ve_i = (e_{i,j})$ such that $e_{i,j} = 1$ for $j = i$ and $e_{i,j} = 0$ otherwise. 
The main result about the CLIME estimator that we use from \cite{cai_constrained_2011} is 
given by the following lemma, which is a minor reformulation of Theorem 6 in \cite{cai_constrained_2011}:
\begin{lemma}[\cite{cai_constrained_2011}]
\label{lemma:clime}
Let $(\G, \mB, \Set{\var_i})$ be an SEM over $X$, with
covariance and precision matrix $\mSig$ and $\mOmg$ respectively. 
Let $\mhOmg$ be the estimator of $\mOmg$ obtained by solving the 
optimization problem given by \ref{eq:omegahat_colwise}. Then if $\lambda_n \geq \NormI{\mOmg} \Abs{\mSig - \mSig^n}_{\infty}$,
then $\Abs{\mOmg - \mhOmg}_{\infty} \leq 4 \NormI{\mOmg} \lambda_n$. Further, if
\begin{align*}
\min \Set{ \Abs{\Omega_{i,j}}  \mid (i,j) \in [p] \times [p] \Land \Abs{\Omega_{i,j}} \neq 0} > 4 \NormI{\mOmg} \lambda_n, 
\end{align*} 
then $\Sp(\mOmg) \subseteq \Sp(\mhOmg)$.
\end{lemma}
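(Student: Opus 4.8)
The plan is to reproduce, in the present notation, the CLIME argument of \cite{cai_constrained_2011}, proceeding in three stages: (i) feasibility of the true precision matrix for each columnwise LP, (ii) an elementwise error bound for the non-symmetric estimate $\mbrOmg$, and (iii) transfer of that bound to the symmetrized estimate $\mhOmg$ and to support recovery. For the feasibility stage I would start from $\mSig\mOmg = \mI$, which gives $\mSig^n\mOmg - \mI = (\mSig^n - \mSig)\mOmg$. Combining the elementwise inequality $\Abs{(\mSig^n - \mSig)\mOmg}_{\infty} \le \Abs{\mSig^n - \mSig}_{\infty}\,\NormI{\mOmg}$ with the hypothesis $\lambda_n \ge \NormI{\mOmg}\,\Abs{\mSig - \mSig^n}_{\infty}$ yields $\Abs{\mSig^n\mOmg - \mI}_{\infty} \le \lambda_n$. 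Reading this column by column shows each $\mOmg_{*,i}$ is feasible for \eqref{eq:omegahat_colwise}, so optimality of $\vbomg_i$ gives $\NormI{\vbomg_i} \le \NormI{\mOmg_{*,i}}$ for every $i$, hence $\NormI{\mbrOmg} \le \NormI{\mOmg}$ (induced $\ell_1$-norms being maximal absolute column sums). Note this stage is deterministic; the randomness is confined to the separate choice of $\lambda_n$.

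For the elementwise bound I would use the identity $\mbrOmg - \mOmg = \mOmg\mSig\mbrOmg - \mOmg = \mOmg(\mSig\mbrOmg - \mI)$ (again from $\mOmg\mSig = \mI$) and split $\mSig\mbrOmg - \mI = (\mSig - \mSig^n)\mbrOmg + (\mSig^n\mbrOmg - \mI)$. I would then invoke the two mixed submultiplicative inequalities $\Abs{\mA\mB}_{\infty} \le \Norm{\mA}_{\infty}\Abs{\mB}_{\infty}$ and $\Abs{\mA\mB}_{\infty} \le \Abs{\mA}_{\infty}\NormI{\mB}$, the symmetry identity $\Norm{\mOmg}_{\infty} = \NormI{\mOmg}$, feasibility ($\Abs{\mSig^n\mbrOmg - \mI}_{\infty} \le \lambda_n$), and the first-stage bounds $\NormI{\mbrOmg} \le \NormI{\mOmg}$ and $\Abs{\mSig - \mSig^n}_{\infty} \le \lambda_n/\NormI{\mOmg}$. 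This bounds each of the two terms by $\NormI{\mOmg}\lambda_n$, giving $\Abs{\mbrOmg - \mOmg}_{\infty} \le 2\NormI{\mOmg}\lambda_n$, which is a factor of two sharper than the stated constant; I keep $4$ to match \cite{cai_constrained_2011}.

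Finally I would pass to $\mhOmg$ and to support recovery. Since $\mOmg$ is symmetric and $\homg_{i,j}$ is whichever of $\bomg_{i,j},\bomg_{j,i}$ has smaller magnitude, the error $\Abs{\homg_{i,j} - \Omega_{i,j}}$ equals $\Abs{\bomg_{i',j'} - \Omega_{i',j'}}$ for one of $(i',j') \in \Set{(i,j),(j,i)}$, hence is no larger than $\Abs{\mbrOmg - \mOmg}_{\infty} \le 4\NormI{\mOmg}\lambda_n$; symmetrization never inflates the bound. For support recovery, under the minimum-signal hypothesis any nonzero $\Omega_{i,j}$ obeys $\Abs{\homg_{i,j}} \ge \Abs{\Omega_{i,j}} - \Abs{\homg_{i,j} - \Omega_{i,j}} > 4\NormI{\mOmg}\lambda_n - 4\NormI{\mOmg}\lambda_n = 0$ by the reverse triangle inequality, so $(i,j) \in \Sp(\mhOmg)$ and therefore $\Sp(\mOmg) \subseteq \Sp(\mhOmg)$.

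I expect the only real subtlety to be the bookkeeping of the two distinct mixed matrix norms in the second stage: keeping straight which factor is measured in the elementwise $\Abs{\cdot}_{\infty}$ norm and which in the induced $\NormI{\cdot}$ (equivalently $\Norm{\cdot}_{\infty}$) norm, and invoking symmetry of $\mOmg$ exactly where it is needed to identify $\Norm{\mOmg}_{\infty}$ with $\NormI{\mOmg}$. Everything else reduces to LP optimality or one-line triangle-inequality arguments.
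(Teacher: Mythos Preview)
Your argument is correct and is precisely the standard CLIME proof from \cite{cai_constrained_2011}. The paper itself does not prove this lemma at all: it is stated as ``a minor reformulation of Theorem 6 in \cite{cai_constrained_2011}'' and invoked as a black box, so there is no paper proof to compare against beyond the original reference you are reproducing. Your three-stage structure (feasibility of $\mOmg$, elementwise control of $\mbrOmg$, transfer to the symmetrized $\mhOmg$) matches that reference, and your observation that the argument actually delivers the constant $2$ rather than $4$ is also correct; keeping $4$ is harmless and consistent with how the paper uses the lemma downstream.
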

Next we state out finite sample identifiability condition. This differs from the population
version in that we require a ``gap'' between the diagonal entries of the precision matrix
for terminal and non-terminal vertices. This gap, as we show later, 
must scale as $\BigOm{d^2 \sqrt{\frac{\log p}{n}}}$ and $\BigOm{\frac{d^2  (p)^{\nicefrac{1}{m}}}{\sqrt{n}}}$
for sub-Gaussian noise and bounded moment noise respectively. Condition (ii) of the below assumption
also restricts how fast the ``minimum'' non-diagonal entry of the precision matrix must decay. 
Note that our conditions are weaker than those of \cite{loh_high-dimensional_2013} due to which
we are able to achieve better sample complexity than their algorithm.
\begin{assumption}[Finite Sample Identifiability Condition]
\label{ass:variance_condition}
Let $(\G, \mB, \Set{\var_i})$ be an SEM with  inverse covariance matrix $\mOmg$. 
Let $\mOmg_{(m, \tau)}$ denote the inverse covariance matrix over $X_{\Vs[m, \tau]}$, and
\begin{align}
M \defeq \max\Set{\NormI{\mOmg_{(m, \tau)}} \mid m \in [p], \tau \in \Ts_{\G}}. \label{eq:constant_M}
\end{align}
Then, we have that
\begin{enumerate}[(i)]
\item $\forall (i, j) \in \Vs[m, \tau] \times \Vs[m, \tau], m \in [p]$, and $\tau \in \Ts_{\G}$, 
such that $\Chi{i}{\G[m,\tau]} = \varnothing \Land \Chi{j}{\G[m,\tau]} \neq \varnothing$:
\begin{align*}
\frac{1}{\sigma_i^2} < \frac{1}{\sigma_j^2} + \sum_{l \in \Chi{j}{\G[m, \tau]}} \frac{B_{l,j}^2}{\sigma_l^2} - 8 M \lambda_n,
\end{align*}
\item $\min \Set{\Abs{(\Omega_{(m, \tau)})_{i,j}} \mid (\Omega_{(m, \tau)})_{i,j} \neq 0, (i, j) \in \Vs[m, \tau] \times \Vs[m, \tau],
 m \in [p], \tau \in \Ts_{\G}} > 4 M \lambda_n$,
\item for all $i \in [p]$, $\var_i \in o(\nicefrac{1}{4 M \lambda_n})$.
\end{enumerate}
\end{assumption}
The following lemma proves the correctness of Algorithm \ref{alg:clime_update} which
updates the precision matrix, after removing a terminal vertex.
\begin{lemma}
\label{lemma:clime_update}
Let $(\G, \mB, \Set{\var_i})$ be an SEM over $X$ with precision matrix $\mOmg$. 
Let $\mhOmg$ be an estimator of $\mOmg$ such that $\Abs{\mOmg - \mhOmg}_{\infty} \leq 4 M \lambda_n$,
and $\Sp(\mOmg) \subseteq \Sp(\mhOmg)$, where $M$ is defined in \eqref{eq:constant_M}.
Let $i$ be a terminal vertex in the $\G$, $\mOmg_{(\mi)}$ be the true precision matrix over $X_{\mi}$, 
and let $\mhOmg'$ be the matrix returned by the function \Update{}. 
Then, $\Abs{\mOmg_{(\mi)} - \mhOmg'_{\mi, \mi}}_{\infty} \leq 4 M \lambda_n$ and $\Sp(\mOmg_{(\mi)}) \subseteq \Sp(\mhOmg')$.
\end{lemma}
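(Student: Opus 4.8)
The plan is to split the entries of $\mhOmg'$ into those left untouched by \textsc{Update} and those recomputed by the inner CLIME solve, and to verify both the $\ell_\infty$ bound and the support containment on each group. First I would collect the structural facts. Since $i$ is terminal, Proposition~\ref{prop:weights} gives $\Par{i}{\G}=\Sp(\mOmg_{i,*})\setminus\Set{i}$, and $\Sp(\mOmg)\subseteq\Sp(\mhOmg)$ then forces $\Par{i}{\G}\subseteq\hPar{i}{}$, so the estimated parent set over-covers the true one. Because $i$ is a sink, there is a $\tau\in\Ts_{\G}$ with $\tau(i)=p$, whence $\mi=\Vs[p-1,\tau]$, $\mOmg_{(\mi)}=\mOmg_{(p-1,\tau)}$, and $\NormI{\mOmg_{(\mi)}}\leq M$ by \eqref{eq:constant_M}.

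Next, for each $j\in\hPar{i}{}$ I would show that the restricted support $\Sh_j$ already contains the entire true support of the $j$-th row of $\mOmg_{(\mi)}$, i.e. $\Sp((\mOmg_{(\mi)})_{j,*})\subseteq\Sh_j$. For $j\in\Par{i}{\G}$ this is the second inclusion of Lemma~\ref{lemma:precision_minus_i} combined with $\Sp(\mOmg_{j,*})\subseteq\Sp(\mhOmg_{j,*})$ and $\Par{i}{\G}\subseteq\hPar{i}{}$; for $j\in\hPar{i}{}\setminus\Par{i}{\G}$ the first identity of Lemma~\ref{lemma:precision_minus_i} gives $(\mOmg_{(\mi)})_{j,*}=\Omega_{j,\mi}$, whose support lies in $\Sh_j$ as well. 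This inclusion is the crucial enabler: it says the part of column $j$ of $\mOmg_{(\mi)}$ outside $\Sh_j$ vanishes.

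With this in hand I would show the restricted solve recovers column $j$ accurately. Set $\vomg^*\defeq(\mOmg_{(\mi)})_{\Sh_j,j}$. Restricting the identity $\mSig_{\mi,\mi}(\mOmg_{(\mi)})_{*,j}=\ve_j$ to the rows $\Sh_j$ and using $(\mOmg_{(\mi)})_{k,j}=0$ for $k\notin\Sh_j$ collapses it to $\mSig_{\Sh_j,\Sh_j}\vomg^*=\ve_j$; thus $\vomg^*$ is the $j$-th column of the marginal precision $(\mSig_{\Sh_j,\Sh_j})^{-1}$. Since $\vomg^*$ is a column of $\mOmg_{(\mi)}$ we have $\NormI{\vomg^*}\leq M$, so $\Abs{\mSig^n_{\Sh_j,\Sh_j}\vomg^*-\ve_j}_{\infty}=\Abs{(\mSig^n-\mSig)_{\Sh_j,\Sh_j}\vomg^*}_{\infty}\leq M\Abs{\mSig-\mSig^n}_{\infty}\leq\lambda_n$ under the regularization condition underlying the hypotheses, i.e. $\vomg^*$ is feasible for \eqref{eq:omegahat_colwise}. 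A CLIME-type error conversion, exactly as in Lemma~\ref{lemma:clime} but applied to the subvector $X_{\Sh_j}$, then yields $\Abs{\vbomg_j-\vomg^*}_{\infty}\leq 4\,\NormI{(\mSig_{\Sh_j,\Sh_j})^{-1}}\,\lambda_n$, and the minimum-nonzero-entry gap of Assumption~\ref{ass:variance_condition}(ii) upgrades this to $\Sp(\vomg^*)\subseteq\Sp(\vbomg_j)$. Every recomputed entry of $\mhOmg'$ is a coordinate of some such $\vbomg_j$, so it inherits both guarantees.

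For the untouched entries $(j,k)$, both indices avoid $\Par{i}{\G}$, so the first identity of Lemma~\ref{lemma:precision_minus_i} gives $(\mOmg_{(\mi)})_{j,k}=\Omega_{j,k}$, and then $\Abs{\Omega_{j,k}-\mhOmg_{j,k}}\leq 4M\lambda_n$ together with $\Sp(\mOmg)\subseteq\Sp(\mhOmg)$ transfer the bound and the support containment verbatim from the hypotheses; zeroing the $i$-th row and column is harmless since $\mOmg_{(\mi)}$ is indexed by $\mi$. Combining the two groups gives both conclusions. The step I expect to be the main obstacle is the norm bound $\NormI{(\mSig_{\Sh_j,\Sh_j})^{-1}}\leq M$ needed above: unlike $\mi$, the set $\Sh_j$ is a neighborhood-type set rather than an ancestral set $\Vs[m,\tau]$, so the marginal precision $(\mSig_{\Sh_j,\Sh_j})^{-1}$ is not literally one of the matrices appearing in the definition \eqref{eq:constant_M} of $M$. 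Controlling its $\ell_1$-operator norm --- for instance by bounding $\Sh_j$ inside an enclosing ancestral set or via a direct Schur-complement estimate against $\mOmg_{(\mi)}$ --- is where I anticipate the real work, and it is ultimately what dictates the finite-sample margins imposed in Assumption~\ref{ass:variance_condition}.
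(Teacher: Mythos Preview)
Your structural groundwork---$\Par{i}{\G}\subseteq\hPar{i}{}$, the inclusion $\Sp((\mOmg_{(\mi)})_{j,*})\subseteq\Sh_j$ via Lemma~\ref{lemma:precision_minus_i}, and the handling of the untouched entries through the first identity of that lemma---is correct and matches the paper. The divergence is precisely at the point you yourself flag as the obstacle.

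You propose to invoke the CLIME guarantee on the marginal $X_{\Sh_j}$, which forces you to control $\NormI{(\mSig_{\Sh_j,\Sh_j})^{-1}}$; as you note, $\Sh_j$ is not an ancestral set and this norm is not one of those bounded by $M$ in~\eqref{eq:constant_M}. The paper never confronts this quantity. Instead of treating the solve as a CLIME on $X_{\Sh_j}$, it pads $\vbomg_j$ with zeros on $\mi\setminus\Sh_j$ to get $\vomg'_j\in\R^{\Abs{\mi}}$ and runs the error conversion on the \emph{full} $\mi$-problem. Two ingredients make this work. First, because the true column $(\mOmg_{(\mi)})_{*,j}$ is supported on $\Sh_j$ and satisfies $\aAbs{\mSig^n_{\mi,\mi}(\mOmg_{(\mi)})_{*,j}-\ve_j}_\infty\leq\lambda_n$, its restriction is feasible for the restricted program, so the minimizer obeys $\NormI{\vomg'_j}\leq\NormI{(\mOmg_{(\mi)})_{*,j}}\leq M$. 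Second---and this is the point your reading of \textsc{Update} as ``CLIME on $\mSig^n_{\Sh_j,\Sh_j}$'' misses---the restricted program in the proof imposes the constraints $\aAbs{\mSig^n_{k,\Sh_j}\vomg}\leq\lambda_n$ for \emph{every} $k\in\mi$, not only $k\in\Sh_j$; hence after padding one has $\aAbs{\mSig^n_{\mi,\mi}\vomg'_j-\ve_j}_\infty\leq\lambda_n$ on all of $\mi$. The identity $\vomg'_j-(\mOmg_{(\mi)})_{*,j}=\mOmg_{(\mi)}\,\mSig_{\mi,\mi}\bigl(\vomg'_j-(\mOmg_{(\mi)})_{*,j}\bigr)$ then delivers the $\ell_\infty$ bound with the factor $\NormI{\mOmg_{(\mi)}}\leq M$, which \emph{is} one of the matrices appearing in~\eqref{eq:constant_M}. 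In short: restrict the \emph{variables} to $\Sh_j$ but keep the \emph{constraints} over all of $\mi$, and do the error analysis at the $\mi$ level rather than the $\Sh_j$ level. That dissolves the obstacle you anticipated without any Schur-complement gymnastics on $(\mSig_{\Sh_j,\Sh_j})^{-1}$.
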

\begin{theorem}
\label{thm:main_finite_sample}
Let $(\G^*, \mB^*, \Set{\var_i})$ be the true SEM,
with covariance and precision matrix $\mSig^*$ and $\mtOmg$, respectively,
from which a data set $\mX$ of $n$ samples is drawn. If the regularization parameter
satisfies $\lambda_n \geq M \Abs{\mSig^n - \mSig^*}$, then under Assumption \ref{ass:variance_condition},
the Algorithm \ref{alg:population}, with $\mD$ set to $\mI$, returns an estimator
$\mhB$ such that $\Abs{\mB^* - \mhB} \leq c4 M (1 + \Bmax) \varmax \lambda_n$, $\Sp(\mB^*) \subseteq \Sp(\mhB)$,
and $\Ts_{\Gh} \subseteq \Ts_{\G^*}$, where $c \leq \nicefrac{\varmin}{(1 - 4 M \lambda_n \varmin)}$ is a constant.
\end{theorem}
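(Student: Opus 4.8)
The plan is to argue by induction on the $p$ iterations of Algorithm~\ref{alg:population}, carrying throughout the invariant that the current (CLIME-updated) estimate $\mhOmg$ is close in elementwise $\ell_\infty$ norm to the true precision matrix $\mOmg_{(m,\tau)}$ over the surviving vertices, $\Abs{\mOmg_{(m,\tau)} - \mhOmg_{\Vs[m,\tau],\Vs[m,\tau]}}_{\infty} \le 4M\lambda_n$, together with the support containment $\Sp(\mOmg_{(m,\tau)}) \subseteq \Sp(\mhOmg)$, where $m$ is the number of surviving vertices and $\tau$ is consistent with the removal order so far. For the base case I would invoke Lemma~\ref{lemma:clime}: since $M \ge \NormI{\mtOmg}$ by definition~\eqref{eq:constant_M}, the hypothesis $\lambda_n \ge M\Abs{\mSig^n - \mSig^*}$ gives $\lambda_n \ge \NormI{\mtOmg}\Abs{\mSig^* - \mSig^n}_\infty$, hence $\Abs{\mtOmg - \mhOmg}_\infty \le 4\NormI{\mtOmg}\lambda_n \le 4M\lambda_n$, while Assumption~\ref{ass:variance_condition}(ii) at $m=p$ supplies the min-entry condition needed for $\Sp(\mtOmg)\subseteq\Sp(\mhOmg)$. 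The inductive preservation of the invariant across one iteration (identifying and removing a terminal vertex, then refreshing the estimate via the \Update{} routine of Algorithm~\ref{alg:clime_update}) is exactly the content of Lemma~\ref{lemma:clime_update}.

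The crux, and where I expect the main obstacle, is showing that line~\ref{line:terminal_vertex} selects a genuine terminal vertex of $\G[m,\tau]$ despite estimation error (recall $\mD=\mI$, so the criterion is $\argmin$ of $\diag(\mhOmg)$). Let $i$ be terminal and $j$ non-terminal in $\G[m,\tau]$. By Proposition~\ref{prop:precision}, $(\mOmg_{(m,\tau)})_{i,i}=1/\sigma_i^2$ and $(\mOmg_{(m,\tau)})_{j,j}=1/\sigma_j^2+\sum_{l\in\Chi{j}{\G[m,\tau]}}B_{l,j}^2/\sigma_l^2$, so Assumption~\ref{ass:variance_condition}(i) reads precisely $(\mOmg_{(m,\tau)})_{j,j}-(\mOmg_{(m,\tau)})_{i,i}>8M\lambda_n$. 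Combining this gap with the two-sided $4M\lambda_n$ error on each diagonal from the invariant gives $\mhOmg_{j,j}\ge(\mOmg_{(m,\tau)})_{j,j}-4M\lambda_n>(\mOmg_{(m,\tau)})_{i,i}+4M\lambda_n\ge\mhOmg_{i,i}$, so every non-terminal diagonal strictly exceeds every terminal one and the $\argmin$ is forced onto a terminal vertex. Iterating over all $p$ steps shows the removal sequence is a valid reverse topological order of $\G^*$.

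With a correct terminal vertex $i$ identified, I would bound the weights read off in line~\ref{line:pop_ith_weights}. Since $i$ is terminal, Proposition~\ref{prop:precision} gives $(\mOmg_{(m,\tau)})_{i,j}=-B_{i,j}/\sigma_i^2$, so $\Abs{(\mOmg_{(m,\tau)})_{i,j}}\le\Bmax/\sigma_i^2$, and $(\mOmg_{(m,\tau)})_{i,i}=1/\sigma_i^2$. Writing $\hat{B}_{i,j}-B_{i,j}$ as the difference of ratios $\mhOmg_{i,j}/\mhOmg_{i,i}$ and $(\mOmg_{(m,\tau)})_{i,j}/(\mOmg_{(m,\tau)})_{i,i}$ over a common denominator, the numerator is at most $4M\lambda_n\big((\mOmg_{(m,\tau)})_{i,i}+\Abs{(\mOmg_{(m,\tau)})_{i,j}}\big)\le 4M\lambda_n(1+\Bmax)/\sigma_i^2$, uniformly in $j$ (the bound also covers extra-support entries, where the true ratio vanishes). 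The denominator $\mhOmg_{i,i}(\mOmg_{(m,\tau)})_{i,i}$ is bounded below by $(1-4M\lambda_n\sigma_i^2)/\sigma_i^4$, and Assumption~\ref{ass:variance_condition}(iii) ensures $4M\lambda_n\sigma_i^2=o(1)$ so this stays positive. Hence $\Abs{\hat{B}_{i,j}-B_{i,j}}\le 4M\lambda_n(1+\Bmax)\sigma_i^2/(1-4M\lambda_n\sigma_i^2)$ entrywise; bounding $\sigma_i^2\le\varmax$ and absorbing the denominator into the constant $c$ yields the claimed elementwise bound on $\Abs{\mB^*-\mhB}$.

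Finally, the support and ordering guarantees follow from the invariant. By Proposition~\ref{prop:weights}, $\Sp(\mB^*_{i,*})=\Sp((\mOmg_{(m,\tau)})_{i,*})\setminus\Set{i}$, which by $\Sp(\mOmg_{(m,\tau)})\subseteq\Sp(\mhOmg)$ sits inside $\Sp(\mhOmg_{i,*})\setminus\Set{i}=\Sp(\mhB_{i,*})$; ranging over all $i$ gives $\Sp(\mB^*)\subseteq\Sp(\mhB)$, i.e.\ $\Es^*\subseteq\hat{\Es}$. The topological claim is then immediate: since each $\mhB_{i,*}$ is supported only on vertices still present when $i$ is removed, $\Gh$ is acyclic, and because $\Es^*\subseteq\hat{\Es}$ any ordering respecting every edge of $\Gh$ respects every edge of $\G^*$, so $\Ts_{\Gh}\subseteq\Ts_{\G^*}$.
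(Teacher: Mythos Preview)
Your proposal is correct and follows essentially the same inductive scheme as the paper's proof: establish the $4M\lambda_n$ elementwise bound and support containment for the initial CLIME estimate via Lemma~\ref{lemma:clime} and Assumption~\ref{ass:variance_condition}(ii), propagate the invariant across iterations via Lemma~\ref{lemma:clime_update}, and bound the error in the ratio $\mhOmg_{i,j}/\mhOmg_{i,i}$ for each identified terminal vertex. Your treatment is in fact more explicit than the paper's in two places---the terminal-vertex identification step (where you spell out how the $8M\lambda_n$ gap in Assumption~\ref{ass:variance_condition}(i) absorbs the $4M\lambda_n$ perturbation on each diagonal entry) and the justification of $\Ts_{\Gh}\subseteq\Ts_{\G^*}$ from edge containment---but these are elaborations of the same argument rather than a different route.
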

\begin{theorem}[Sub-Gaussian noise]
\label{thm:sub_gaussian}
Given an SEM $(\G^*, \mB^*, \Set{\var_i})$ with $\G^* \in \Gf_{p,d}$ satisfying Assumptions \ref{ass:variance_condition}
such that $\nicefrac{N_i}{\sigma_i}$ is sub-Gaussian with parameter $\nu$;
if the regularization parameter and number of samples satisfy the following conditions:
\begin{align*}
\lambda_n \geq M C_1 \sqrt{\frac{2}{n} \log \left(\frac{2p}{\sqrt{\delta}}\right) }, &&
n \geq \frac{2(c C_1 4 M^2 (1 + \Bmax) \varmax)^2}{\varepsilon^2} \log \left(\frac{2p}{\sqrt{\delta}}\right),
\end{align*}
then $\Abs{\mB^* - \mhB}_{\infty} \leq \varepsilon$ with probability at least $1 - \delta$,
where $C_1 = \sqrt{128}(1 + 4\nu^2) (\max_i \mSig^*_{i,i})$, $c$ is defined in Theorem \ref{thm:main_finite_sample},
and $M$ is given by \eqref{eq:constant_M}. Further, thresholding $\mhB$ at the level $\varepsilon$
we get that $\Sp(\mhB) = \Sp(\mB^*)$ and $\Gh = \G^*$.
\end{theorem}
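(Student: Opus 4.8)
The plan is to reduce the statement to the deterministic guarantee already established in Theorem \ref{thm:main_finite_sample} by supplying the missing probabilistic ingredient, namely a high-probability bound on the sampling error $\Abs{\mSig^n - \mSig^*}_{\infty}$. Theorem \ref{thm:main_finite_sample} asserts that \emph{on the event} $\lambda_n \geq M\Abs{\mSig^n - \mSig^*}_{\infty}$ we have $\Abs{\mB^* - \mhB}_{\infty} \leq c\,4M(1+\Bmax)\varmax\,\lambda_n$ together with $\Sp(\mB^*) \subseteq \Sp(\mhB)$. So I only need to show that, under the stated choice of $\lambda_n$, this event holds with probability at least $1-\delta$, and then tune $n$ so that the resulting bound is at most $\varepsilon$.

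First I would transfer sub-Gaussianity from the noise to the observed variables. Writing $X = (\mI-\mB)^{-1}N$ and $N_j = \sigma_j\,(N_j/\sigma_j)$, each $X_i$ is a fixed linear combination of the independent, unit-variance, sub-Gaussian variables $N_j/\sigma_j$; hence $X_i$ is sub-Gaussian with parameter $\nu\sqrt{\sum_j [(\mI-\mB)^{-1}]_{i,j}^2\,\sigma_j^2} = \nu\sqrt{\mSig^*_{i,i}}$, using $\mSig^* = (\mI-\mB)^{-1}\mD(\mI-\mB)^{-T}$. Consequently the normalized variable $X_i/\sqrt{\mSig^*_{i,i}}$ is sub-Gaussian with parameter $\nu$, uniformly in $i$. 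This is precisely the hypothesis required by the standard entrywise concentration bound for empirical covariances of sub-Gaussian vectors, which gives $\Prob{\Abs{\mSig^n_{i,j} - \mSig^*_{i,j}} > t} \leq 4\exp(-n t^2/C_1^2)$ for $t \leq C_1/\sqrt{2}$, with $C_1 = \sqrt{128}(1+4\nu^2)(\max_i \mSig^*_{i,i})$.

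Next I would apply a union bound over all $p^2$ entries and set $t = C_1\sqrt{\tfrac{2}{n}\log(2p/\sqrt{\delta})}$. Since $4p^2/\delta = (2p/\sqrt{\delta})^2$, this choice yields $4p^2\exp(-n t^2/C_1^2) = \delta$, so that $\Abs{\mSig^n - \mSig^*}_{\infty} \leq C_1\sqrt{\tfrac{2}{n}\log(2p/\sqrt{\delta})}$ with probability at least $1-\delta$; I would also check that this $t$ lies in the validity range $t \leq C_1/\sqrt{2}$, which holds once $n \geq 4\log(2p/\sqrt{\delta})$. Multiplying by $M$ and comparing with the assumed $\lambda_n \geq M C_1\sqrt{\tfrac{2}{n}\log(2p/\sqrt{\delta})}$ shows $\lambda_n \geq M\Abs{\mSig^n - \mSig^*}_{\infty}$ on this event, so Theorem \ref{thm:main_finite_sample} applies. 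Substituting $\lambda_n$ at its lower bound gives $\Abs{\mB^* - \mhB}_{\infty} \leq c\,C_1 4M^2(1+\Bmax)\varmax\sqrt{\tfrac{2}{n}\log(2p/\sqrt{\delta})}$, and the sample-size condition $n \geq \tfrac{2(c\,C_1 4M^2(1+\Bmax)\varmax)^2}{\varepsilon^2}\log(2p/\sqrt{\delta})$ is exactly what forces this to be at most $\varepsilon$.

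Finally, for exact support and DAG recovery I would combine $\Sp(\mB^*)\subseteq\Sp(\mhB)$ with the entrywise bound. Any spurious edge $(i,j)\notin\Sp(\mB^*)$ satisfies $\Abs{\mhB_{i,j}} = \Abs{\mhB_{i,j}-\mB^*_{i,j}} \leq \varepsilon$ and is removed by thresholding at level $\varepsilon$, whereas a true edge satisfies $\Abs{\mhB_{i,j}} \geq \Bmin - \varepsilon > \varepsilon$ provided $\varepsilon < \Bmin/2$, so true edges survive; hence $\Sp(\mhB) = \Sp(\mB^*)$ and therefore $\Gh = ([p],\Sp(\mhB)) = \G^*$. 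The main obstacle is the concentration step: correctly propagating sub-Gaussianity through $(\mI-\mB)^{-1}$ with the right normalization and pinning down the constant $C_1$ so that the union bound produces precisely the $\log(2p/\sqrt{\delta})$ dependence. The remaining work is algebraic substitution into the already-proven Theorem \ref{thm:main_finite_sample}.
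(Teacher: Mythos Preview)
Your proposal is correct and follows essentially the same route as the paper: propagate sub-Gaussianity through $X=(\mI-\mB)^{-1}N$ to conclude that $X_i/\sqrt{\mSig^*_{i,i}}$ is sub-Gaussian with parameter $\nu$, invoke the entrywise covariance concentration bound (Lemma~1 of \cite{ravikumar_high-dimensional_2011}) with a union bound to control $\Abs{\mSig^n-\mSig^*}_{\infty}$, and then plug into Theorem~\ref{thm:main_finite_sample}. Your treatment is in fact slightly more explicit than the paper's in checking the validity range of the concentration inequality and in justifying the thresholding step via the implicit condition $\varepsilon < \Bmin/2$.
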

\begin{theorem}[Bounded moment noise]
\label{thm:bounded_moment}
Given an SEM $(\G^*, \mB^*, \Set{\var_i})$ with $\G^* \in \Gf_{p,d}$ satisfying Assumption \ref{ass:variance_condition}
such that $(\nicefrac{\Exp{}{N_i}}{\sigma_i})^{4m} \leq K_m, \, \forall i \in [p]$, where $m$ 
is a positive integer and $K_m \in \R^{+}$ is a constant. If the regularization parameter and number of samples
satisfy the following conditions:
\begin{align*}
\lambda_n \geq M C_2 \left(\frac{p^2}{n^m \delta}\right)^{\nicefrac{1}{2m}}, &&
n \geq \frac{(cC_2 4 M^2(1 + \Bmax)\varmax)^2}{\varepsilon^2} \left(\frac{p^2}{\delta}\right)^{\nicefrac{1}{m}},
\end{align*}
then $\Abs{\mB^* - \mhB}_{\infty} \leq \varepsilon$ with probability at least $1 - \delta$,
where $C_2 = 2(\max_i \mSig^*_{i,i})(C_m(C_m(K_m + 1) + 1))^{\nicefrac{1}{2m}}$, 
$C_m$ is a constant that depends only on $m$, $c$ is defined in Theorem \ref{thm:main_finite_sample},
and $M$ is given by \eqref{eq:constant_M}. Further, thresholding $\mhB$ at the level $\varepsilon$
we get that $\Sp(\mhB) = \Sp(\mB^*)$ and $\Gh = \G^*$.
\end{theorem}

\section{Learning SEMs with known error variances}
In this section we focus our attention on the problem of learning
SEMs when the error variances are known upto a constant factor. 
We will consider SEMs $(\G, \mB, \Set{\alpha \var_i})$ where $\Set{\var_i}_{i=1}^p$ are known (to the learner)
and $\alpha > 0$ is some unknown constant.
Identifiability of this class of SEMs was proved by \cite{loh_high-dimensional_2013} under a
\emph{faithfulness} assumption. However, we will merely assume that $(\G, \mB, \Set{\alpha \var_i})$
is causal minimal, i.e., $\Sp(\mB) = \Es$ --- this ensures that the distribution $\Pf(X)$ defined by 
the SEM is causal minimal to the DAG $\G = ([p], \Es)$.
An immediate consequence of Proposition \ref{prop:precision} is 
the following observation about terminal vertices:
\begin{proposition}
\label{prop:terminal_vertex_known_var}
Let $(\G, \mB, \Set{\alpha \var_i})$ be an SEM over $X$ with precision matrix $\mOmg$, $\Set{\var_i}_{i=1}^p$ known and $\alpha > 0$
is some unknown constant. Then, $i$ is a terminal vertex in $\G$ if and only if $i \in \argmin \diag(\mOmg \hadprod \mD)$,
where $\mD = \Diag(\var_1, \ldots, \var_p)$.
\end{proposition}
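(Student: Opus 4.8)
The plan is to directly compute the diagonal entries of $\mOmg \hadprod \mD$ via the closed-form characterization of the precision matrix in Proposition \ref{prop:precision}, and then show that this diagonal is minimized precisely at the terminal vertices. First I would instantiate Proposition \ref{prop:precision} for the SEM $(\G, \mB, \Set{\alpha \var_i})$, whose noise variance at node $i$ is $\alpha \var_i$. Substituting $\alpha \var_l$ for every variance in the formula for the diagonal entries and pulling out the common factor $\nicefrac{1}{\alpha}$ yields
\begin{align*}
\Omega_{i,i} = \frac{1}{\alpha \var_i} + \sum_{l \in \Chi{i}{\G}} \frac{B_{l,i}^2}{\alpha \var_l}
= \frac{1}{\alpha}\left(\frac{1}{\var_i} + \sum_{l \in \Chi{i}{\G}} \frac{B_{l,i}^2}{\var_l}\right).
\end{align*}

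Next I would form the Hadamard product with $\mD = \Diag(\var_1, \ldots, \var_p)$. Since $\mD$ is diagonal, $\mOmg \hadprod \mD$ is the diagonal matrix whose $i$-th diagonal entry is $\Omega_{i,i}\,\var_i$. Multiplying the expression above by $\var_i$ gives
\begin{align*}
\left(\diag(\mOmg \hadprod \mD)\right)_i = \Omega_{i,i}\,\var_i
= \frac{1}{\alpha}\left(1 + \var_i \sum_{l \in \Chi{i}{\G}} \frac{B_{l,i}^2}{\var_l}\right).
\end{align*}
The crucial observation is that the leading term $\nicefrac{1}{\alpha}$ is identical across all nodes $i$, so the unknown scaling $\alpha$ cancels out of any comparison and the $\argmin$ is independent of $\alpha$. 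What differentiates the nodes is solely the summation over children, which is manifestly nonnegative.

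Finally I would argue the equivalence. If $i$ is terminal, then $\Chi{i}{\G} = \varnothing$, the sum is empty, and the entry equals exactly $\nicefrac{1}{\alpha}$. If $i$ is not terminal, then $\Chi{i}{\G} \neq \varnothing$, and since $\Sp(\mB) = \Es$ (causal minimality) forces $B_{l,i} \neq 0$ for every child $l$, while $\var_i, \var_l > 0$, the sum is strictly positive, giving an entry strictly greater than $\nicefrac{1}{\alpha}$. Hence the minimum value of $\diag(\mOmg \hadprod \mD)$ is $\nicefrac{1}{\alpha}$, attained at a node if and only if that node is terminal, which establishes both directions of the claim. I do not expect a genuine obstacle here, as the argument is a short algebraic consequence of Proposition \ref{prop:precision}; the only points requiring care are confirming that the multiplication by $\mD$ exactly cancels the $\nicefrac{1}{\var_i}$ term so that all terminal vertices share the common value $\nicefrac{1}{\alpha}$ regardless of the unknown $\alpha$, and invoking causal minimality to guarantee strict positivity of the child-sum for every non-terminal vertex.
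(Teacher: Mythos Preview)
Your proof is correct and follows exactly the route the paper indicates: it derives the diagonal of $\mOmg \hadprod \mD$ directly from Proposition~\ref{prop:precision}, observes that the unknown factor $\alpha$ is common to all entries, and uses causal minimality to conclude that the child-sum is strictly positive for non-terminal vertices. The paper itself presents this proposition as an immediate consequence of Proposition~\ref{prop:precision} without a separate written proof, so your argument is precisely the intended one.
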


Thus, when the error variances are known upto a constant factor, Algorithm \ref{alg:population} can be used to learn SEMs,
under the assumption of causal minimality, by setting $\mD = \Diag(\var_1, \ldots, \var_p)$.
Consequently, we have the following result about learning SEMs with known error variances:
\begin{theorem}
\label{thm:main_known_error}
Let $(\G, \mB, \Set{\alpha \var_i})$ be an SEM over $X$, with precision matrix $\mOmg$ and $\Set{\var_i}_{i=1}^p$ known. 
Then, if $(\G, \mB, \Set{\alpha \var_i})$ is causal minimal and given $\mOmg,\,   
\mD = \Diag(\var_1, \ldots, \var_p)$ as input, Algorithm \ref{alg:population}
returns a unique $(\Gh, \mhB)$ such that $\Gh = \G$ and $\mhB = \mB$.
\end{theorem}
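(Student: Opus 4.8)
The plan is to prove correctness by induction on the $p$ iterations of Algorithm \ref{alg:population}, maintaining the loop invariant that, at the start of each iteration, the working matrix $\mOmg$ restricted to the not-yet-removed indices equals the true precision matrix of the SEM induced on those vertices, and that this induced SEM is again of the form $(\G_{\mi}, \mB_{\mi,\mi}, \Set{\alpha\var_j})$ --- with the \emph{same} unknown factor $\alpha$ and the \emph{same} known variances $\Set{\var_j}$ --- on the induced subgraph $\G_{\mi}$. The first thing I would check is the structural fact underlying this invariant: because a terminal vertex $i$ has no children, nothing in the system depends on $X_i$, so the equations for the remaining variables are self-contained and $X_{\mi}$ is exactly the SEM on the induced subgraph with autoregression matrix $\mB_{\mi,\mi}$ and noise variances $\Set{\alpha\var_j}$ over the surviving vertices. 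Moreover, causal minimality is inherited, since deleting a terminal vertex removes precisely the edges incident to $i$, leaving $\Sp(\mB_{\mi,\mi})$ equal to the induced edge set.

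Granting the invariant, each iteration performs the three required operations correctly. First, the selection $i \gets \argmin(\diag(\mOmg\hadprod\mD))$ returns a terminal vertex of the current induced subgraph by Proposition \ref{prop:terminal_vertex_known_var}, applied to the current sub-SEM whose variances are still $\Set{\alpha\var_j}$ and whose weighting $\mD$ is the fixed input matrix restricted to the remaining indices; the key point is that $(\mOmg\hadprod\mD)_{i,i}$ equals $\nicefrac{1}{\alpha}$ exactly at a terminal vertex and is strictly larger at any vertex with children, causal minimality forcing at least one $B_{l,j}\neq 0$ among the children $l\in\Chi{j}{}$. Second, for this terminal $i$, Proposition \ref{prop:weights} guarantees that $\mB_{i,*}\gets -\nicefrac{\mOmg_{i,*}}{\Omega_{i,i}}$ recovers the true row of $\mB$ and that $\Sp(\mOmg_{i,*})\setminus\Set{i}=\Par{i}{}$, so exactly the true incoming edges are read off. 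Third, Lemma \ref{lemma:schur_update} shows that the Schur-complement update replaces $\mOmg$ on the surviving indices by the true precision matrix of the sub-SEM over $X_{\mi}$, which re-establishes the invariant for the next iteration; setting $\Omega_{i,i}\gets\infty$ prevents $i$ from ever being reselected.

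After the $p$-th iteration every row of $\mhB$ has been filled in exactly once from the corresponding true precision matrix, so $\mhB=\mB$ and therefore $\Gh=([p],\Sp(\mhB))=([p],\Es)=\G$. For the uniqueness claim I would verify that the output is independent of how ties in the $\argmin$ are broken: if $i$ and $i'$ are both terminal, they share neither an edge nor a common child, so removing $i$ leaves $i'$ terminal and leaves the $i'$-row of the reduced precision matrix unchanged; hence each row of $\mB$ is read off identically regardless of removal order, and the algorithm returns the same correct $(\Gh,\mhB)$.

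The argument mirrors that of Theorem \ref{thm:main_unknown_error}, with Proposition \ref{prop:terminal_vertex_known_var} playing the role that Proposition \ref{prop:terminal_vertex} played there; since that proposition together with Proposition \ref{prop:weights} and Lemma \ref{lemma:schur_update} are already in hand, the only substantive work is verifying the inductive invariant. The main obstacle --- and the only place requiring genuine care --- is confirming that the unknown scale $\alpha$ and the known variances $\Set{\var_j}$ carry over unchanged to every induced sub-SEM, so that the \emph{fixed} input matrix $\mD=\Diag(\var_1,\ldots,\var_p)$ remains the correct weighting at each iteration. This is exactly the step at which the known-variance hypothesis, rather than any Assumption \ref{ass:identifiability_condition}-style inequality, is what licenses the $\argmin$ rule, so no separate identifiability condition is needed here.
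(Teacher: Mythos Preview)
Your proposal is correct and follows essentially the same approach the paper (implicitly) uses: the paper does not spell out a separate proof of Theorem \ref{thm:main_known_error} but presents it as an immediate consequence of the argument for Theorem \ref{thm:main_unknown_error}, with Proposition \ref{prop:terminal_vertex_known_var} substituted for Proposition \ref{prop:terminal_vertex}. Your inductive loop-invariant argument, use of Proposition \ref{prop:weights} and Lemma \ref{lemma:schur_update}, and tie-breaking uniqueness analysis all mirror the paper's proof of Theorem \ref{thm:main_unknown_error}; the extra care you take in verifying that the induced sub-SEM keeps the same $\alpha$ and the same known $\Set{\var_j}$ is exactly the point that makes the fixed weighting $\mD$ valid throughout, and is the only place the argument genuinely differs from the unknown-variance case.
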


\paragraph{Misspecified error variances.}
Our algorithm can also be used to learn SEMs with misspecified error variances as considered by \cite{loh_high-dimensional_2013}.
For instance, if the true SEM is $(\G, \mB, \Set{\var_i})$ while the diagonal matrix passed to Algorithm \ref{alg:population}
is $\mD = \Diag((\sigma'_1)^2, \ldots, (\sigma'_p)^2)$, then it is straightforward to verify that the
following condition is sufficient to ensure that Algorithm \ref{alg:population} still recovers the structure and parameters of the SEM correctly: 
\begin{align*}
 \sum_{\mcp{l \in \Chi{j}{\G[m, \tau]}}} B^2_{l,j}  > \frac{\alphamax}{\alphamin} - 1, &&
(\forall j \in \Vs[m, \tau] \Land \Chi{j}{\G[m, \tau]} \neq \varnothing,  m \in [p], \tau \in \Ts),
\end{align*}
where $\alphamax \defeq \max \Set{\nicefrac{(\sigma'_i)^2}{\sigma_i^2} \mid i \in [p]}$ (similarly $\alphamin$).
Next, we obtain statistical guarantees for our algorithm for learning SEMs with known error variances.

\subsection{Statistical guarantees for estimation}
In order to learn SEMs with known error variances from a finite number of samples, we make the following assumptions:
\begin{assumption}
\label{ass:finite_sample_known_varaince}
Given an SEM $(\G, \mB, \Set{\alpha \var_i})$ with precision matrix $\mOmg$ and $\Set{\var_i}_{i=1}^p$ known,
let $\mOmg_{(m, \tau)}$ denote the inverse covariance matrix over $X_{\Vs[m, \tau]}$.
Then, 
\begin{enumerate}[(i)]
\item $\forall i \in \Vs[m, \tau], m \in [p]$, and $\tau \in \Ts_{\G}$, 
such that $\Chi{i}{\G[m,\tau]} \neq \varnothing$:
\begin{align*}
\sum_{l \in \Chi{i}{\G[m, \tau]}} \left(\frac{\var_i}{\var_l}\right) B_{l,i}^2 > 8 \alpha M \lambda_n,
\end{align*}
\item $\min \Set{\Abs{(\Omega_{(m, \tau)})_{i,j}} \mid (\Omega_{(m, \tau)})_{i,j} \neq 0, (i, j) \in \Vs[m, \tau] \times \Vs[m, \tau],
 m \in [p], \tau \in \Ts_{\G}} > 4 M \lambda_n$,
\item for all $i \in [p]$, $\var_i \in o(\nicefrac{1}{4 \alpha M \lambda_n})$.
\end{enumerate}
\end{assumption}
Using CLIME to estimate and update the precision matrix, it is easy to verify that Theorems \ref{thm:sub_gaussian}
and \ref{thm:bounded_moment} hold for SEMs with known error variances satisfying Assumption \ref{ass:finite_sample_known_varaince},
with $\varmax$ and $\varmin$ replaced by $\alpha \varmax$ and $\alpha \varmin$, respectively. 
Thus, given a data set of $n$ samples drawn from an SEM satisfying Assumption \ref{ass:finite_sample_known_varaince}, 
with autoregression matrix $\mB^*$ and DAG structure $\G^* = ([p], \Es^*)$, we have the following results
about sub-Gaussian and bounded-moment noise:
\begin{remark}
For sub-Gaussian noise, if $\lambda_n = \BigOm{\frac{d^2}{\sqrt{n}} \sqrt{\log (\frac{p}{\sqrt{\delta}})}}$, and
the number of samples $n = \BigOm{\frac{d^8}{\varepsilon^2} \log (\frac{p}{\sqrt{\delta}}) }$,
then Algorithm \ref{alg:population} with $\mD = \Diag(\var_1, \ldots, \var_p)$ returns an estimator $\mhB$ such that
$\Abs{\mhB - \mB^*}_{\infty} \leq \varepsilon$, with probability at least $1 - \delta$.
Thresholding $\mhB$ at the level $\varepsilon$, we have $\Sp(\mhB) = \Es^*$.
\end{remark}
\begin{remark}
For noise with bounded $(4m)$-th moment, with $m$ being a positive integer, 
if the regularization parameter $\lambda_n = \BigOm{\frac{d^2}{\sqrt{n}} (\frac{p}{\sqrt{\delta}})^{\nicefrac{1}{m}} }$, and
the number of samples $n = \BigOm{\frac{d^8}{\varepsilon^2} (\frac{p^2}{\delta})^{\nicefrac{1}{m}}}$,
then Algorithm \ref{alg:population} with $\mD = \Diag(\var_1, \ldots, \var_p)$ returns an estimator $\mhB$ such that
$\Abs{\mhB - \mB^*}_{\infty} \leq \varepsilon$, with probability at least $1 - \delta$.
Thresholding $\mhB$ at the level $\varepsilon$, we have $\Sp(\mhB) = \Es^*$.
\end{remark}
The above remarks use the fact that $M = \BigO{d^2}$, which follows from Proposition \ref{prop:max_support_size} given in Appendix.

\section{Computational complexity}
In the population setting, i.e., given the true precision matrix, our
algorithm can be implemented by storing the diagonal of the precision matrix separately
and sorting it once which takes $\BigO{p \log p}$ time. In each iteration, updating the
precision matrix in line \ref{line:pop_update} takes $\BigO{d^2}$ time since
$\mOmg_{*,i}$ and $\mOmg_{i,*}$ are $d$-sparse. Updating the diagonal takes $\BigO{d \log p}$ time,
while searching for the minimum diagonal element takes $\BigO{\log p}$ time.
Therefore, Algorithm \ref{alg:population} computes the $\mhB$ matrix in $\BigO{p(d^2 + d \log p)}$ time.
In the population setting, the computational complexity of \cite{loh_high-dimensional_2013}'s algorithm is
$\BigO{p 2^{2(w+1)(w+d)}}$, where $w$ is the tree-width of the DAG structure of the true SEM. Note
that the population version of our algorithm can still be used in the finite sample setting if the
precision matrix is estimated accurately enough.

In the finite sample setting, the computational complexity of our algorithm is dominated by the steps for estimating
and updating the precision matrix --- the latter depends on how well the sparsity pattern
of the precision matrix is estimated. First, we analyze the computational
complexity of our algorithm assuming exact support recovery, then we analyze the worst-case performance
of our algorithm without assuming sparsity of the estimated precision matrix. 
Estimating the precision matrix can be done by solving $p$ linear programs in $2p$-dimension and with $4p$ constraints.
The smoothed complexity of this step is $\BigO{p^3 \log (\nicefrac{p}{\sigma})}$ when using interior point
LP solvers \cite{dunagan_smoothed_2011}, where $\sigma^2$ is variance of the Gaussian perturbations 
\footnote{The worst-case complexity of interior point methods for solving LPs is $\BigO{p^3 L}$
where $L$ `` is a parameter measuring the precision needed to perform the arithmetic operations exactly'' 
and grows as $\BigOm{p}$ \cite{spielman_smoothed_2003}. However, interior-point methods work much
more efficiently in practice and have an average complexity of $\BigO{p^3 \log p}$ 
(see \cite{spielman_smoothed_2003} and the references therein).}.
Next observe that $\Abs{\mtOmg - \mhOmg}_{\infty} \leq \Abs{\mB^* - \mhB}_{\infty} \leq \varepsilon$. 
By thresholding $\mhOmg$ at the level $\varepsilon$, each time the precision matrix is updated, 
we can ensure exact support recovery in each iteration. Thus, in the $\Update$ function
$\hPar{i}{} = \Par{i}{\G^*}$ and $\Abs{\Sh_j} \leq d^2 \leq p$. Therefore, the $\Update$ function
takes $\BigO{d^7 \log (\nicefrac{d}{\sigma})}$ operations, leading to an overall complexity of $\BigOT{p^3 + pd^7}$.
In the worst case, i.e., without any thresholding, $\mhOmg$ can be dense. Therefore, the $\Update$ function
might re-estimate the full precision matrix over $p - t$ variables in iteration $t$, which takes
$\BigO{(p - t)^4 \log (\nicefrac{(p - t)}{\sigma})}$ operations, leading to an overall complexity of $\BigOT{p^5}$.
Thus, in the finite sample setting the complexity of our algorithm is between $\BigOT{p^3 + pd^7}$ and $\BigOT{p^5}$.
Note that \cite{loh_high-dimensional_2013}'s 
analysis of the computational complexity of their algorithm assumes perfect support recovery of the precision matrix.
In this regime, the computational complexity of their method is $\BigO{p 2^{2(w+1)(w+d)} + p^3}$, including
the step to estimate the precision matrix using graphical Lasso \cite{friedman_sparse_2008}, where $w$ is the tree-width
of the true DAG. However, without thresholding the output of graphical Lasso can be dense leading to a worst-case
computational complexity that is exponential in $p$.
\section{Appendix}
\begin{appendix}
\section{Detailed Proofs}
\begin{proof}[Proof of Proposition \ref{prop:suff_cond_identifiability}]
When $\var_i = \var$ for all $i \in [p]$, then \eqref{eq:identifiability_condition}
reduces to:
\begin{align*}
\sum_{l \in \Chi{j}{\G[m, \tau]}} \frac{B_{l,j}^2}{\sigma_l^2} > 0,
\end{align*}
which holds trivially by causal minimality since $B_{l,j}^2 > 0$ for $(l, j) \in \Es$. This proves part (i).

Now under (ii), $\nicefrac{1}{\var_i} - \nicefrac{1}{\var_j} < 1\, , \forall i, j \in [p]$.
Also,  $\nicefrac{B_{l,j}^2}{\sigma_l^2} \geq 1$ for all $(l, j) \in \Es$. Thus \eqref{eq:identifiability_condition}
is satisfied.
\end{proof}
\begin{proof}[Proof of Lemma \ref{lemma:necessary}]
Consider the following two SEMs
over three nodes, where the noise variances are shown within braces below each node,
and the edge weights are shown on the edges.
\begin{center}
\includegraphics[width=0.8\linewidth]{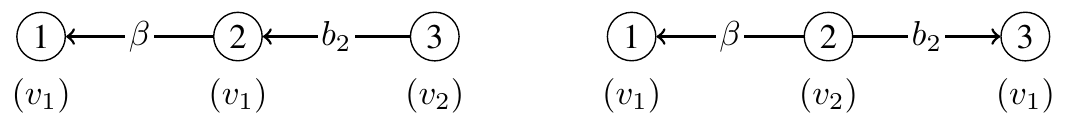}
\end{center}
Both the SEMs make the following conditional independence assertion: $X_1 \independent X_3 \mid X_2$,
and are therefore Markov and causal minimal to $\Pf(X)$. Set $b_2 = \sqrt{1 - \frac{v_1}{v_2}}$.
Then using the formulas
derived in Proposition \ref{prop:precision} it can be verified that the precision matrix for
both the SEMs is:
\begin{align}
\mOmg = \frac{1}{v_1} \times
\matrx{
1 && -\beta && 0 \\
-\beta && 1 + \beta^2 && -b_2 \\
0 && -b_2 && 1
}.   \label{eq:precision_mat_example}
\end{align}
The SEM on the left does not satisfy Assumption \ref{ass:identifiability_condition}
because vertex $3$ is a non-terminal vertex but $3 \in \argmin(\mOmg)$.
The SEM on the right does not satisfy Assumption \ref{ass:identifiability_condition} because
after the vertex $1$ is removed we have that vertex $3$ is a non-terminal vertex but
satisfies $3 \in \argmin(\mOmg_{(-1)})$, where $\mOmg_{(-1)}$ is the precision matrix
over vertices $\Set{2,3}$.

Now we construct the subset $\Gfr_{p,d}$ with $p = 3k$ for $k = 1,2,\ldots$, as follows.
We randomly set the DAG structure over nodes $(3i - 1), (3i)$ and $(3i + 1)$ to one
of the two configurations shown in the above figure. Therefore we have, $\Abs{\Gfr_{p,d}} = 2^{\nicefrac{(p-1)}{3}}$.
We generate matrices $\mB(\beta)$
and $\mD(v_1, v_2)$ as prescribed. The precision matrix block over the nodes $(3i - 1), (3i)$,
and $(3i + 1)$, for $i \in [\nicefrac{(p-1)}{3}]$, is given by \eqref{eq:precision_mat_example},
and all the other entries of the precision matrix are zeros.
This proves our claim. 

While the above constructions constructs a family of disconnected DAGs, with $d = 1$,
it is easy to come up with subsets of DAGs that are connected and still satisfy the statement 
of the lemma. One such construction is shown below where $d = \nicefrac{(p-1)}{3}$.
The entries of the first row (and also the first column) of the precision matrix, 
for $i \in [\nicefrac{(p-1)}{3}]$, are as follows:
\begin{align*}
\Omega_{1,1} = \frac{1}{v_0} + \frac{(p - 1) b_0^2}{3 v_1},\,
\Omega_{1, 3i-1} = -\frac{b_0}{v_1}, \,
\Omega_{1, 3i} = \frac{b_0 \beta}{v_1}.
\end{align*}
\includegraphics[width=\linewidth]{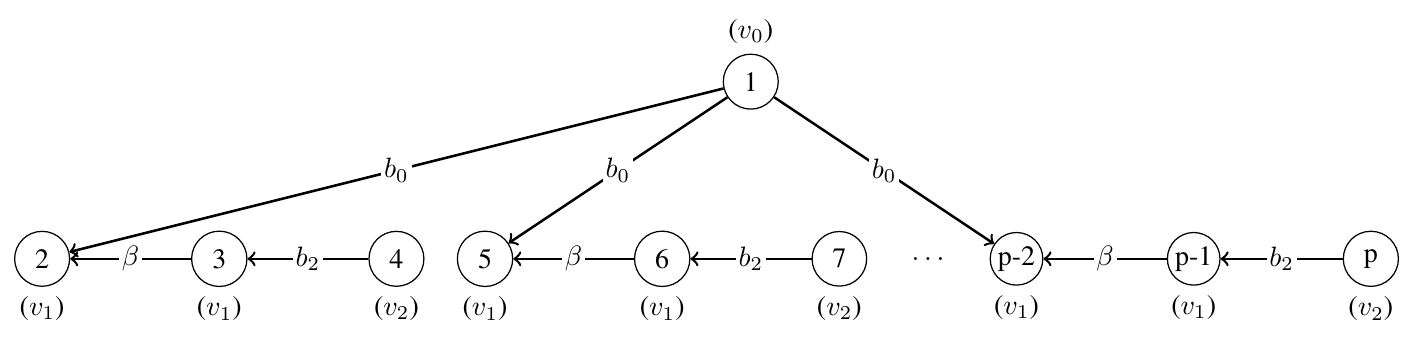}
As shown before, each triplet of nodes $(3i - 1) \leftarrow (3i) \leftarrow (3i + 1)$,
for $i \in [\nicefrac{(p-1)}{3}]$, can be oriented as $(3i - 1) \leftarrow (3i) \rightarrow (3i + 1)$
without changing the block of the precision matrix over the nodes $(3i - 1), (3i)$ and $(3i + 1)$,
and the entries $\mOmg_{1,*}$ or $\mOmg_{*,1}$.  
\end{proof}

\begin{proof}[Proof of Proposition \ref{prop:precision}]
From \eqref{eq:sem_vect} we have that $(\mI - \mB) X = N$, and since
$(\mI - \mB)$ is invertible, $X = (\mI - \mB)^{-1} N$. Therefore:
\begin{align*}
\mSig = \Exp{}{X X^T} = \Exp{}{(\mI - \mB)^{-1} N N^T (\mI - \mB)^{-T}} 
	= (\mI - \mB)^{-1} \mD (\mI - \mB)^{-T}.
\end{align*}
From which it follows that $\mOmg = (\mI - \mB)^T \mD^{-1} (\mI - \mB)$, where 
$\mD^{-1} = \Diag(\nicefrac{1}{\var_1}, \ldots, \nicefrac{1}{\var_p})$.
From this the result for the entries of the precision matrix follows by sparsity pattern of $\mB$.
\end{proof}
\begin{proof}[Proof of Proposition \ref{prop:terminal_vertex}]
From \eqref{eq:precision_mat} we have that for a terminal vertex $i$,
$\Omega_{i,i} = \nicefrac{1}{\var_i}$, while for a non-terminal vertex $j$,
$\Omega_{j,j} = \nicefrac{1}{\var_j} + \sum_{l \in \Chi{j}{}} \nicefrac{B_{l,j}^2}{\var_l}$.
Therefore, by Assumption \ref{ass:identifiability_condition} we have that for all non-terminal
vertices $j$ and terminal vertices $i$, $\Omega_{j,j} > \Omega_{i,i}$.

Now since every DAG has at least one terminal vertex, if $i \in \argmin(\diag(\mOmg))$, then
once again by Assumption \ref{ass:identifiability_condition}, we have  that $i$ must be a terminal vertex.
\end{proof}
\begin{proof}[Proof of Lemma \ref{lemma:schur_update}]
First note that since $i$ is a terminal vertex, the autoregression matrix over $X_{\mi}$ 
is simply $\mB_{\mi, \mi}$. Therefore, denoting $\mD' \defeq \Diag(\var_1, \ldots, \var_{i-1}, \var_{i+1}, \var_p)$
and by Proposition \ref{prop:precision} we have:
\begin{align*}
\mOmg_{(\mi)} &= (\mI - \mB_{\mi, \mi})^T (\mD')^{-1} (\mI - \mB_{\mi, \mi}) 
	= \sum_{j \in \mi} \frac{1}{\var_j} ((\ve_j)_{\mi} - \mB^T_{j,\mi}) ((\ve_j)^T_{\mi} - \mB_{\mi,j}) \\
	&= \sum_{j \in [p]} \frac{1}{\var_j} \left((\ve_j - \mB^T_{j, *}) (\ve^T_j, - \mB_{j, *})\right)_{\mi, \mi} 
		- \frac{1}{\var_i} \left((\ve_i - \mB^T_{i, *}) (\ve^T_i - \mB_{i ,*})\right)_{\mi, \mi} \\
	&= \mOmg_{\mi, \mi} - \frac{1}{\var_i} \left(\mB^T_{i, \mi} \mB_{i, \mi}\right)
	= \mOmg_{\mi, \mi} - \Omega_{i,i} \frac{\mOmg_{i,\mi}^T}{\Omega_{i,i}} \frac{\mOmg_{i,\mi}}{\Omega_{i,i}}
	= \mOmg_{\mi, \mi} - \frac{1}{\Omega_{i,i}} \mOmg_{\mi,i} \mOmg_{i,\mi},
\end{align*}
where in the last line we used the fact that for a terminal vertex $\Omega_{i,i} = \nicefrac{1}{\var_i}$
(Proposition \ref{prop:terminal_vertex}), 
and $\mB_{i,\mi} = -\nicefrac{\mOmg_{i,\mi}}{\Omega_{i,i}}$ (Proposition \ref{prop:weights}).
\end{proof}
\begin{proof}[Proof of Lemma \ref{lemma:precision_minus_i}]
First consider the case when $j \notin \Par{i}{\G}$. Then, for any $k \in [p] \setminus \Set{i,j}$,
$i \notin (\Chi{j}{\G} \intersection \Chi{k}{\G})$. Therefore, by Proposition \ref{prop:precision},
$(\Omega_{(\mi)})_{j,k} = \Omega_{j,k}$, and by symmetry of the precision matrix $(\Omega_{(\mi)})_{k,j} = \Omega_{k,j}$.
Thus, we have that for any $(j,k)$ if at least one of $\Set{j,k}$ is not in $\Par{i}{\G}$, 
then $(\Omega_{(\mi)})_{j,k} = \Omega_{j,k}$, which proves our first claim.
Thus, the only remaining case to consider is when both $j, k \in \Par{i}{\G}$. 
The are two ways is which the set $\Sp((\mOmg_{(\mi)})_{j,*})$ can be larger than the set $\Sp(\mOmg_{j,*})$,
i.e., the support set of the $j$-th node can increase after deleting the terminal node $i$. 
The first being when
$j,k \in \Par{i}{\G}$ and either $(j,k) \in \Es$ or $(k,j) \in \Es$ but $\Omega_{j,k} = 0$, in which 
case we have:
\begin{align*}
\sum_{l \in \Chi{j}{} \intersection \Chi{k}{}} \nicefrac{(B_{l,j} B_{l,k})}{\var_l} = 
\nicefrac{B_{j,k}}{\var_j} + \nicefrac{B_{k,j}}{\var_k}.
\end{align*}
Then, after removing the terminal node $i$, we have 
\begin{align*}
(\Omega_{(\mi)})_{j,k} = - \nicefrac{B_{j,k}}{\var_j} - \nicefrac{B_{k,j}}{\var_k} +
	\sum_{\mcp{l \in (\Chi{j}{} \intersection \Chi{k}{} \setminus \Set{i})}} \nicefrac{(B_{l,j} B_{l,k})}{\var_l} \neq 0.
\end{align*} 
The other case is when $j,k \in \Par{i}{\G}$, $(j,k) \notin \Es$, $(k,j) \notin \Es$ 
but $\Omega_{j,k} = 0$, in which case we have:
\begin{align*}
\sum_{l \in \Chi{j}{} \intersection \Chi{k}{}} \nicefrac{(B_{l,j} B_{l,k})}{\var_l} = 0.
\end{align*}
Therefore, after removing the terminal node we have:
\begin{align*}
(\Omega_{(\mi)})_{j,k} = 
	\sum_{\mcp{l \in (\Chi{j}{} \intersection \Chi{k}{} \setminus \Set{i})}} \nicefrac{(B_{l,j} B_{l,k})}{\var_l} \neq 0.
\end{align*} 
Thus, $\Sp((\mOmg_{(\mi)})_{j,*}) \subseteq (\Sp(\mOmg_{j,*}) \setminus \Set{i}) \union \Par{i}{\G}$.
\end{proof}
\begin{proof}[Proof of Theorem \ref{thm:main_unknown_error}]
Let $i_t$ be the terminal vertex identified in iteration $t$, $\cI_t \defeq \Set{i_1, \ldots, i_t}$
and $\cR_t \defeq [p] \setminus \cI_t$. Let $\mOmg_{(i)}$ be the precision matrix after iteration $i$.
The correctness of the algorithm follows from the following loop invariants:
\begin{enumerate}[(i)]
\item By Lemma \ref{lemma:schur_update} we have that, $(\mOmg_{(t)})_{\cR_t, \cR_t}$ is the correct precision matrix over $X_{\cR_t}$.
\item The algorithm identifies a correct terminal vertex in iteration $t$, since 
$(\mOmg_{(t-1)})_{\cR_{t-1}, \cR_{t-1}}$ is the correct precision matrix over $X_{\cR_{t-1}}$, 
the SEM over $X_{\cR_{t-1}}$ satisfies Assumption \ref{ass:identifiability_condition} by definition,
and $\forall i \in \cI_{t-1}, \, \Omega_{i,i} = \infty$.
\item By proposition \ref{prop:terminal_vertex} we have that at the end of round $t$, the sub-matrix $\mB_{\cI_t, *}$ has
	been correctly set and that $\forall i \in \cI_t, \, \Par{i}{\G} = \Sp(\mB_{i, *})$.
\end{enumerate}
To see that the algorithm returns a unique autoregression matrix $\mhB$, consider the following.
If at iteration $t$ there is a unique minimizer of $\diag(\mOmg_{(t - 1)})$, which implies a single
terminal vertex, then the algorithm selects it and the incoming edge weights of the node is uniquely determined. 
While, in iteration $t$ if 
there are multiple terminal vertices, leading to multiple minimizers of $\diag(\mOmg_{(t - 1)})$,
then the order in which they are eliminated does not matter. Or in other words, once
a vertex becomes a terminal vertex, for instance after deletion of its children, its edge weights 
do not change. To see this, assume that there are two terminal vertices, $i$ and $j$ after iteration $t - 1$.
Then $i$ and $j$ are not in each other's parent sets. Therefore, if node $i$ is eliminated in iteration $t$,
then by Lemma \ref{lemma:precision_minus_i} we have that $(\Omega_{(t)})_{j,k} = (\Omega_{(t - 1)})_{j,k},\,
\forall k \in \Par{j}{\G}$. Hence, we have that $\mB$ is the unique autoregression matrix returned by
the algorithm. 
\end{proof}
\begin{proof}[Proof of Lemma \ref{lemma:clime_update}]
Let $\mOmg_{(\mi)} = (\vomg_{j})_{j \in \mi}$ be the true precision matrix over $X_{\mi}$ and let $\mhOmg' = (\vomg'_j)_{j \in [p]}$ 
be the matrix returned by the function \Update{}. The estimator $\mhOmg_{(\mi)} = (\vhomg_{j})_{j \in \mi}$ of $\mOmg_{(\mi)}$
can be obtained by solving \eqref{eq:omegahat_colwise} using $\mSig^n_{\mi, \mi}$. By Lemma \ref{lemma:clime},
and the facts that $\Abs{\mSig^n_{\mi, \mi} - \mSig_{\mi, \mi}}_{\infty} \leq \Abs{\mSig^n - \mSig}_{\infty}$ and
$\NormI{\mOmg_{(i)}} \leq M$, we have that $\Abs{\mOmg_{(\mi)} - \mhOmg_{(\mi)}} \leq 4 M \lambda_n$.
Since $i$ is a terminal vertex, by Proposition \ref{prop:weights} we have $\Par{i}{\G} = \Sp(\mOmg_{i,*}) \setminus \Set{i}$.
Further, since $\Sp(\mOmg_{j,*}) \subseteq \Sp(\mhOmg_{j,*})$, $\forall j \in [p]$, we have by Assumption \ref{ass:variance_condition} (ii) that, 
$\Par{i}{\G} \subseteq \hPar{i}{} = \Sp(\mhOmg_{i,*}) \setminus \Set{i} \subseteq \Sh$. 
By Lemma \ref{lemma:precision_minus_i} and Assumption \ref{ass:variance_condition} (ii)
 we have that $\forall j \in \Sh_j$,
$\Sp(\vomg_j) \subseteq \Sp\left(\mOmg_{j, *} \setminus \Set{i} \right) \union \Par{i}{\G} 
\subseteq \Sp\left(\mhOmg_{j, *} \setminus \Set{i} \right) \union \hPar{i}{} \defeq \Sh_j$. 
Or in other words we have $\left(\mOmg_{(i)} \right)_{j, \Sh_j^c} = \left(\mOmg_{(i)} \right)_{\Sh_j^c, j} = \vect{0}$.
Now for $j \in \mi$ we set $(\vomg'_j)_{\Sh_j} = \vbomg_j$ and $(\vomg'_j)_{\Sh_j^c} = \vect{0}$, where 
$\vbomg_j$ is obtained by solving:
\begin{align*}
\begin{array}{ll}
\underset{\vomg \in \R^{\Abs{\Sh_j}}}{\argmin} & \NormI{\vomg}, \\
\text{sub. to} & \aAbs{\mSig^n_{k, \Sh_j} \vomg} \leq \lambda_n, \, \forall k \notin \Set{i,j}, \\
 & \aAbs{\mSig^n_{j, \Sh_j}\vomg - 1} \leq \lambda_n.
\end{array}
\end{align*}
Since $\vbomg_j$ is a solution to the above linear program, we have that $\Abs{\mSig^n_{\mi, \mi} \vomg'_j - \ve_j} \leq \lambda_n$
and $\NormI{\vomg'_j} \leq \NormI{\vhomg_j}$. Therefore, $\Abs{\mOmg_{(\mi)} - \mhOmg'_{\mi, \mi}} \leq 4 M \lambda_n$.
Moreover, by Assumption \ref{ass:variance_condition} (ii), and the fact that $\mhOmg'_{i,*} = \mhOmg'_{*,i} = \vect{0}$, we get:
$\Sp(\mOmg_{(\mi)}) \subseteq \Sp(\mhOmg')$.
\end{proof}

\begin{proof}[Proof of Theorem \ref{thm:main_finite_sample}]
Let $i_t$ denote the terminal vertex identified in iteration $t$ and let $\cI_t \defeq \Set{i_1, \ldots, i_t}$.
Let $\cR_t \defeq [p] \setminus \cI_t$ denote the vertices remaining after iteration $t$.
Let $\mhOmg_{(t)}$ denote the precision matrix at the end of iteration $t$,
$\mhOmg_{(\cR_t)} \defeq (\mhOmg_{(t)})_{\cR_t, \cR_t}$, and $\mtOmg_{(\cR_t)}$ be
the true precision matrix over $X_{\cR_t}$. Since $\NormI{\mtOmg} \leq M$, where $M$
is defined in \eqref{eq:constant_M}, we have that 
$\lambda_n \geq M \Abs{\mSig^n - \mSig^*}_{\infty} \geq \NormI{\mtOmg} \Abs{\mSig^n - \mSig^*}_{\infty}$.
Therefore, by Lemma \ref{lemma:clime} and Assumption \ref{ass:variance_condition} (ii), we have that 
$\aAbs{\mhOmg_{(\cR_0)} - \mtOmg_{(\cR_0)}}_{\infty} = \Abs{\mhOmg - \mtOmg}_{\infty} \leq 4 M \lambda_n$,
and $\Sp(\mtOmg_{(\cR_0)}) \subseteq \Sp(\mhOmg_0)$. Therefore, by Assumption \ref{ass:variance_condition}
we have that the Algorithm \ref{alg:population} identifies the correct terminal vertex in iteration $1$.
Therefore, by Lemma \ref{lemma:clime_update} we have that $\aAbs{\mtOmg_{(\cR_{t_1})} - \mhOmg_{(\cR_{t_1})}} \leq 4 M \lambda_n$
and $\Sp(\mtOmg_{(\cR_{t_1})}) \subseteq \mhOmg_{(t_1)}$.

Let $\mE = (\varepsilon_{i,j})$, where $\varepsilon_{i,j} = \Omega^*_{i,j} - \hOmega_{i,j}$.
To simplify notation in this paragraph, we will denote the $i_1$ vertex by simply $i$.
Then, for any $j \neq i$, we have that
\begin{align*}
\Abs{\hB_{i,j} - B^*_{i,j}} 
	&= \aAbs{\frac{\hOmega_{i,j}}{\hOmega_{i,i}} - \frac{\Omega^*_{i,j}}{\Omega^*_{i,i}}}
	= \aAbs{\frac{\Omega^*_{ii}(\Omega^*_{i,j} - \varepsilon_{i,j}) - 
	              (\Omega^*_{i,i} - \varepsilon_{i,i}) \Omega^*_{i,j}}{(\Omega^*_{i,i} - \varepsilon_{i,i}) \Omega^*_{i,i}}} \\
	&= \aAbs{\frac{\Omega^*_{i,i} \varepsilon_{i,j} - \Omega^*_{i,j} \varepsilon_{i,i}}
		{(\Omega^*_{i,i} - \varepsilon_{i,i}) \Omega^*_{i,i}}}
	 = \aAbs{ \frac{\varepsilon_{i,i} - \var_i \Omega^*_{i,j} \varepsilon_{i,i} }{\nicefrac{1}{\var_i} - \varepsilon_{i,i}} } \\
	&= \aAbs{\frac{\varepsilon_{i,i} - B^*_{i,j} \varepsilon_{i,i} }{\nicefrac{1}{\var_i} - \varepsilon_{i,i}}} \\
	&\leq \frac{4 M \lambda_n(1 + \Abs{B^*_{i,j}})}{\Abs{\nicefrac{1}{\var_i} - \varepsilon_{i,i}}} 
	\leq 4cM (1 + \Abs{B^*_{i,j}}) \var_i \lambda_n,
\end{align*}
where the second and third lines follow from the fact that $i$ is a terminal vertex and
therefore, $\Omega^*_{i,i} = \nicefrac{1}{\var_i}$ and $\Omega_{i,j} = \nicefrac{-B_{i,j}}{\var_i}$. 
Therefore, we have that $\Abs{\mB^*_{i_1, *} - \mhB_{i_1,*}}_{\infty} = 4 c M (1 + \Bmax) \varmax \lambda_n$.

Next, assume that the algorithm correctly identifies terminal vertices upto round $t$. Then
$\Abs{\mhOmg_{(\cR_t)} - \mtOmg_{(\cR_t)}}_{\infty} \leq 4 M \lambda_n$, 
$\Sp(\mtOmg_{(\cR_t)}) \subseteq \Sp(\mhOmg_{(t)})$, and 
$\Abs{\mB^*_{\cI_t, \cI_t} - \mhB_{\cI_t, \cI_t}} \leq 4cM (1 + \Bmax) \varmax \lambda_n$.
Therefore, once again by Assumption \ref{ass:variance_condition}, it follows that the algorithm
identifies the correct terminal vertex in round $t+1$, 
$\Abs{\mhOmg_{(\cR_{t+1})} - \mtOmg_{(\cR_{t+1})}}_{\infty} \leq 4 M \lambda_n$, 
$\Sp(\mtOmg_{(\cR_{t+1})}) \subseteq \Sp(\mhOmg_{(t+1)})$, and 
$\Abs{\mB^*_{\cI_{t+1}, \cI_{t+1}} - \mhB_{\cI_{t+1}, \cI_{t+1}}} \leq 4cM (1 + \Bmax) \varmax \lambda_n$.
Hence, the final claim follows  by induction. The claim that $\Sp(\mB^*) \subseteq \Sp(\mhB)$
follows from the fact that $\Sp(\mtOmg) \subseteq \Sp(\mhOmg)$. Finally, since $\Sp(\mB^*) \subseteq \Sp(\mhB)$
implies that $\Ts_{\Gh} \subseteq \Ts_{\G^*}$.
\end{proof}
\begin{proof}[Proof of Theorem \ref{thm:sub_gaussian}]
Given that the data was generated by the SEM $(\G^*, \mB^*, \Set{\var_i})$, each $X_i$
can be written as follows:
\begin{align*}
X_i = \sum_{\mcp{j \in \Anc{i}{\G^*}}} w_{i,j} N_j,
\end{align*}
for some $w_{i,j} \geq 0$. $N_i$ is sub-Gaussian with parameter $\sigma_i \nu$, $X_i$ is
sub-Gaussian with parameter $\nu \sqrt{\sum_{j \in \Anc{i}{\G^*}} w^2_{i,j} \var_i}$
and $\Sigma^*_{i,i} = \sum_{j \in \Anc{i}{\G^*}} w^2_{i,j} \var_i$. Therefore, it 
follows that $\nicefrac{X_i}{\sqrt{\Sigma^*_{i,i}}}$ is sub-Gaussian with parameter $\nu$.
From Lemma 1 of \cite{ravikumar_high-dimensional_2011} and Theorem \ref{thm:main_finite_sample}
we have that the regularization parameter $\lambda_n$ need to satisfy
the following bound in order to guarantee that $\Abs{\mhB - \mB^*}_{\infty} \leq \varepsilon$:
\begin{align}
M C_1 \sqrt{\frac{2}{n} \log \left(\frac{2p}{\sqrt{\delta}}\right) }
	\leq \lambda_n \leq \frac{\varepsilon}{c4M(1 + \Bmax) \varmax}.
\end{align} 
The above holds in the regime where the number of samples scales as given in the statement of the Theorem.
\end{proof}
\begin{proof}[Proof of Theorem \ref{thm:bounded_moment}]
Given that the data was generated by the SEM $(\G^*, \mB^*, \Set{\var_i})$, each $X_i$
can be written as follows:
\begin{align*}
X_i = \sum_{\mcp{j \in \Anc{i}{\G^*}}} w_{i,j} N_j,
\end{align*}
for some $w_{i,j} \geq 0$.
Now,
\begin{align}
\left(\sqrt{\mSig^*_{i,i}}\right)^{4m} 
	= \left(\sum_{j \in \Anc{i}{\G^*}} w^2_{i,j} \var_i \right)^{2m} \geq \sum_{j \in \Anc{i}{\G^*}} (w_{i,j} \sigma_i)^{4m} 
	\label{eq:proof_bm_1}
\end{align}
Now, by Rosenthal's inequality we have:
\begin{align}
\Exp{}{(X_i)^{4m}} &\leq C_m\left\{\quad \sum_{\mcp{j \in \Anc{i}{\G^*}}} w^{4m}_{i,j} \Exp{}{N_j^{4m}} 
	+ \sum_{\mcp{j \in \Anc{i}{\G^*}}} w^{4m}_{i,j} \Var{}{N_i}^{2m} \right\} \notag \\
	&\leq C_m\left\{\quad \sum_{\mcp{j \in \Anc{i}{\G^*}}} w^{4m}_{i,j} \sigma_i^{4m} K_m 
		+ \sum_{\mcp{j \in \Anc{i}{\G^*}}} w^{4m}_{i,j} \sigma_i^{4m} \right\} \notag \\
	&= C_m(K_m + 1) 	\sum_{\mcp{j \in \Anc{i}{\G^*}}} (w_{i,j}	\sigma_i)^{4m} \label{eq:proof_bm_2}
\end{align}
Combining \eqref{eq:proof_bm_1} and \eqref{eq:proof_bm_2} we have
\begin{align}
\Exp{}{\left( \frac{X_i}{\sqrt{\Sigma^*_{i,i}}} \right)^{4m} } \leq C_m(K_m + 1).
\end{align}
From the above and invoking Lemma 2 of \cite{ravikumar_high-dimensional_2011} we get:
\begin{align}
\Abs{\mSig^n - \mSig^*}_{\infty} < C_2 \left(\frac{p^2}{n^m \delta}\right)^{\nicefrac{1}{2m}}, \label{eq:proof_bm_3}
\end{align}
with probability at least $1 - \delta$.
From Theorem \ref{thm:main_finite_sample} and \eqref{eq:proof_bm_3} we have that the regularization parameter
$\lambda$ should satisfy the following for $\Abs{\mhB - \mB^*}_{\infty} \leq \varepsilon$ to hold:
\begin{align}
M C_2 \left(\frac{p^2}{n^m \delta}\right)^{\nicefrac{1}{2m}} \leq \lambda_n \leq 
	\frac{\varepsilon}{c4M(1 + \Bmax) \varmax}.
\end{align}
The above holds in the regime where the number of samples scales as given in the statement of the Theorem.
\end{proof}
\begin{proposition}
\label{prop:max_support_size}
Let $(\G, \mB, \Set{\var_i})$ be an SEM over $X$ with $\G \in \Gf_{p,d}$ and
 precision matrix $\mOmg$. Then, $\Abs{\Sp(\mOmg_{i,*}) \setminus \Set{i}} \leq d^2,\, \forall i \in [p]$.
\end{proposition}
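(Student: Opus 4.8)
The plan is to read the off-diagonal nonzero pattern of the $i$-th row of $\mOmg$ directly off the entrywise formula from Proposition \ref{prop:precision}, and then bound the size of this pattern by a degree-counting argument on $\G$. First I would observe that, by \eqref{eq:precision_mat}, for $j \neq i$ the entry
\[
\Omega_{i,j} = -\frac{B_{i,j}}{\var_i} - \frac{B_{j,i}}{\var_j} + \sum_{l \in \Chi{i}{} \cap \Chi{j}{}} \frac{B_{l,i}B_{l,j}}{\var_l}
\]
can be nonzero only if at least one of the three terms is nonzero. The first term forces $(i,j) \in \Es$, i.e. $j \in \Par{i}{\G}$; the second forces $(j,i) \in \Es$, i.e. $j \in \Chi{i}{\G}$; and a nonzero sum forces $\Chi{i}{\G} \cap \Chi{j}{\G} \neq \varnothing$, i.e. $j$ shares a common child with $i$. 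Hence $\Sp(\mOmg_{i,*}) \setminus \Set{i}$ is contained in $\NB{i}{\G} \cup C_i$, where $C_i \defeq \Set{j \neq i \mid \Chi{i}{\G} \cap \Chi{j}{\G} \neq \varnothing}$ is the set of co-parents (spouses) of $i$. Note this containment holds regardless of any cancellation, since I only use that a zero graph-pattern forces a zero precision entry, never the converse; thus no nondegeneracy or identifiability assumption is required.

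Next I would bound the two pieces separately. The neighbor set satisfies $\Abs{\NB{i}{\G}} \leq d$ directly from $\G \in \Gf_{p,d}$. For the co-parents, every $j \in C_i$ is a parent of some common child $l \in \Chi{i}{\G}$, so $C_i \subseteq \bigcup_{l \in \Chi{i}{\G}} (\Par{l}{\G} \setminus \Set{i})$; since $i$ is itself a parent of each such $l$, removing it leaves $\Abs{\Par{l}{\G} \setminus \Set{i}} \leq \Abs{\NB{l}{\G}} - 1 \leq d - 1$, and there are at most $\Abs{\Chi{i}{\G}} \leq d$ children to range over, giving $\Abs{C_i} \leq d(d-1)$. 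Combining by subadditivity of cardinality under unions,
\[
\Abs{\Sp(\mOmg_{i,*}) \setminus \Set{i}} \leq \Abs{\NB{i}{\G}} + \Abs{C_i} \leq d + d(d-1) = d^2,
\]
which is exactly the claimed bound.

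The one place that needs care, and the only real subtlety, is the co-parent count: one must subtract the vertex $i$ from each $\Par{l}{\G}$ before summing, because $i$ appears as a parent of every child $l \in \Chi{i}{\G}$. Retaining the naive bound $\Abs{\Par{l}{\G}} \leq d$ would yield the weaker $d + d^2$, so the sharp constant $d^2$ hinges precisely on exploiting that $i$ is always one of the (at most $d$) neighbors of each of its own children. Everything else is routine bookkeeping over the Markov blanket of $i$.
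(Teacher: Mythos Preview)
Your proof is correct and follows essentially the same approach as the paper: read off from \eqref{eq:precision_mat} that the off-diagonal support of row $i$ is contained in $\NB{i}{\G}$ together with the co-parents of $i$, then bound the co-parent set by $d(d-1)$ via the same ``at most $d$ children, each with at most $d-1$ other parents'' count. The only cosmetic difference is that the paper defines its co-parent set $\Ss_{\G}(i)$ to explicitly exclude neighbors (so the two pieces are disjoint and inclusion--exclusion is exact), whereas your $C_i$ may overlap $\NB{i}{\G}$ and you absorb the slack via subadditivity; either way the final bound is $d + d(d-1) = d^2$.
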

\begin{proof}[Proof of Proposition \ref{prop:max_support_size}]
For any node $i$, we will define the following set: 
$\Ss_{\G}(i) = \Set{j \in \mi \mid (i,j) \notin \Es \Land (j,i) \notin \Es \Land \Abs{\Omega_{i,j}} \neq 0}$.
Then, from Proposition \ref{prop:precision}, we have: if $j \in \Ss_{\G}(i)$ then 
$\Omega_{i,j} = \sum_{l \in \Chi{i}{} \intersection \Chi{j}{}} \nicefrac{(B_{l,i} B_{l,j})}{\var_l} \neq 0$.
In other words, if $j \in \Ss_{\G}(i)$ then $i$ and $j$ have at least one common child, i.e., 
$\Chi{i}{\G} \intersection \Chi{j}{\G} \neq \varnothing$.
Node $i$ can have at most $d$ children, and each child $k \in \Chi{i}{\G}$ can have at most
$d - 1$ parents other than $i$ making them all members of $\Ss(i)$. Thus, $\Ss(i) \leq d(d - 1)$.
Therefore, we have that $\Sp(\mOmg_{i,*}) \setminus \Set{i} \subseteq \NB{i}{\G} \union \Ss_{\G}(i)$.
Then, using the inclusion-exclusion principle we have that:
\begin{align*}
\Abs{\Sp(\mOmg_{i,*}) \setminus \Set{i}} \leq \Abs{\NB{i}{\G}} + \Abs{\Ss_{\G}(i)} 
	- \Abs{\NB{i}{\G} \intersection \Ss_{\G}(i)} = \Abs{\NB{i}{\G}} + \Abs{\Ss_{\G}(i)} \leq d + d(d-1) = d^2.
\end{align*}
The SEM which achieves the above upper bound is precisely the one constructed in the proof, i.e.,
there exists a node $i$ with exactly $d$ children, each child in turn has $d - 1$ ``other parents''
which are all members of $\Ss_{\G}(i)$.
\end{proof}
\end{appendix}

\bibliographystyle{alpha}
\bibliography{paper}

\end{document}